\newtheorem{theorem}{Theorem}
\crefname{section}{Sec.}{Secs.}
\Crefname{section}{Section}{Sections}
\crefname{table}{Tab.}{Tabs.}
\Crefname{table}{Table}{Tables}
\crefname{equation}{Eq.}{Eqs.}
\Crefname{equation}{Equation}{Equations}
\def\eqref#1{equation~\ref{#1}}
\def\1{\bm{1}}
\newcommand{\bepsilon}{{\boldsymbol{\epsilon}}}
\newcommand{\dif}{{\mathrm{d}}}
\def\rvw{{\mathbf{w}}}
\def\rvx{{\mathbf{x}}}
\def\rmI{{\mathbf{I}}}
\def\vzero{{\bm{0}}}
\DeclareMathAlphabet{\mathsfit}{\encodingdefault}{\sfdefault}{m}{sl}
\SetMathAlphabet{\mathsfit}{bold}{\encodingdefault}{\sfdefault}{bx}{n}
\def\gN{{\mathcal{N}}}
\def\sD{{\mathbb{D}}}
\newcommand{\E}{\mathbb{E}}
\def\eg{\textit{e.g.}}
\def\ie{\textit{i.e.}}
\title{ADBM: Adversarial Diffusion Bridge Model for Reliable Adversarial Purification}
\author{
\textbf{Xiao Li}$^{1*}$, \ \ \textbf{Wenxuan Sun}$^{1, 2}$\thanks{Equal contribution.}, \ \ \textbf{Huanran Chen}$^{1,3}$, \ \ \textbf{Qiongxiu Li}$^{4}$,\\ \textbf{Yining Liu}$^{1, 5}$, \ \ \textbf{Yingzhe He}$^{6}$, \ \ \textbf{Jie Shi}$^{6}$, \ \ \textbf{Xiaolin Hu}$^{1}$\thanks{Correspondence to: Xiaolin Hu.} \\
$^{1}$Department of Computer Science and Technology, Tsinghua University \\
$^{2}$Peking University \ \ $^{3}$Beijing Institute of Technology \ \ $^{4}$Aalborg University \\ $^{5}$Harbin Institute of Technology, Weihai \ \ $^{6}$Huawei Technologies \\
\texttt{lixiao20@mails.tsinghua.edu.cn}, \ \ \texttt{sunwenxuan@stu.pku.edu.cn} \\
\texttt{huanranchen@bit.edu.cn}, \ \ \texttt{qili@es.aau.dk} \\
\texttt{22S030184@stu.hit.edu.cn}, \ \ \texttt{\{heyingzhe, shi.jie1\}@huawei.com} \\
\texttt{xlhu@mail.tsinghua.edu.cn}
}
\begin{document}

\maketitle

\begin{abstract}
Recently Diffusion-based Purification (DiffPure) has been recognized as an effective defense method against adversarial examples. However, we find DiffPure which directly employs the original pre-trained diffusion models for adversarial purification, to be suboptimal. This is due to an inherent trade-off between noise purification performance and data recovery quality. Additionally, the reliability of existing evaluations for DiffPure is questionable, as they rely on weak adaptive attacks. In this work, we propose a novel Adversarial Diffusion Bridge Model, termed ADBM. ADBM directly constructs a reverse bridge from the diffused adversarial data back to its original clean examples, enhancing the purification capabilities of the original diffusion models. Through theoretical analysis and experimental validation across various scenarios, ADBM has proven to be a superior and robust defense mechanism, offering significant promise for practical applications. Code is available at \url{https://github.com/LixiaoTHU/ADBM}.

\end{abstract}

\section{Introduction}
\label{sec:intro}

An intriguing problem in machine learning models, particularly Deep Neural Networks (DNNs), is the existence of adversarial examples \citep{adv13, goodfellow14}. These examples introduce imperceptible adversarial perturbations leading to significant errors, which has posed severe threats to practical applications \citep{realattack, advod}. Numerous methods have been proposed to defend against adversarial examples. But attackers can still evade most early methods by employing adaptive attacks \citep{adaptive18, adaptive20}. Adversarial Training (AT) methods \citep{pgd, zeroshot, pinpp} are recognized as effective defense methods against adaptive attacks. However, AT typically involves re-training the entire DNNs using adversarial examples, which is impractical for real-world applications. Moreover, the effectiveness of AT is often limited to the specific attacks it has been trained against, making it brittle against unseen threats \citep{unseen1, unseen2}.

Recently, Adversarial Purification (AP) methods \citep{sslpurification, yoon} have gained increasing attention as they offer a potential solution to defend against unseen threats in a plug-and-play manner without retraining the classifiers. These methods utilize the so-called purification module, which exploits techniques such as generative models, as a pre-processing step to restore clean examples from adversarial examples, as illustrated in \cref{fig:bridge}(a). Recently, diffusion models \citep{ddpm}, one type of generative model renowned for their efficacy, have emerged as potential AP solutions \citep{diffpure}. Diffusion models learn transformations between complex data distributions and simple distributions like the Gaussian distribution through forward diffusion and reverse prediction processes. In the context of adversarial defense, Diffusion-based Purification (DiffPure) \citep{diffpure}, which tries to purify the adversarial examples by first adding Gaussian noise through the forward process with a small diffusion timestep and then recovering clean examples (removing the added Gaussian noise together with adversarial noise) through the reverse process, has achieved superior performance among recent AP methods.

However, we find that utilizing DiffPure in the context of AP is suboptimal. The primary reason is that DiffPure directly employs pre-trained diffusion models originally designed for generative tasks, rather than specifically for AP tasks. To delve deeper, the divergence in data distributions between clean and adversarial examples \citep{distri1, distri2} often leads to noticeable differences in their respective diffused distributions, while DiffPure relies on the assumption that the two diffused distributions are sufficiently close, such that the original reverse process can recover the diffused adversarial data distribution, as depicted in \cref{fig:bridge}(b). The conflict between the assumption and the actual situation compromises the effectiveness of AP. DiffPure attempts to mitigate this conflict by introducing significant noise with a large diffusion timestep. Nevertheless, this solution is impractical; a substantial diffusion timestep can severely corrupt the global structure of the input, resulting in a fundamental trade-off between noise purification efficiency and recovery quality. In addition, we observe that existing evaluations for DiffPure rely on weak adaptive attacks, which may inadvertently give a false sense of security regarding diffusion-based purification.

\begin{figure}[!t]
  \centering
  \includegraphics[width=0.95\linewidth]{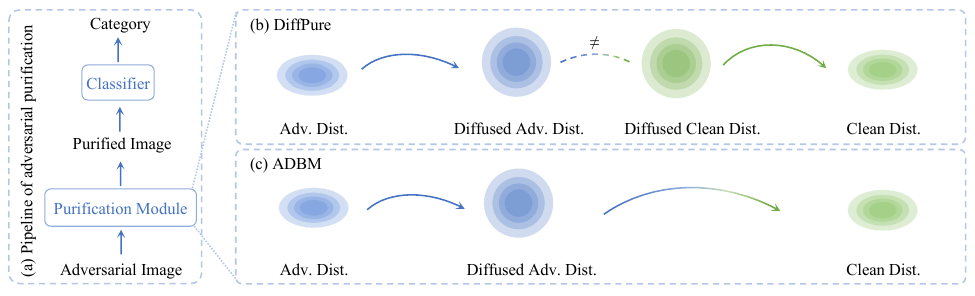}
   \vspace{-2mm}
  \caption{The inference pipeline of AP (a) and the comparison between DiffPure (b) and ADBM (c). DiffPure relies on the diffused adversarial data distribution (Diffused Adv. Dist.) being close enough to the diffused clean data distribution. ADBM directly builds a reverse process from the diffused adversarial data distribution to clean data distribution.}
  \label{fig:bridge}
\end{figure}

To address the aforementioned problems, this study performs a systematic investigation. We first establish a reliable adaptive attack evaluation method for diffusion-based purification. With the rigorous evaluation, our preliminary findings suggest that the robustness of DiffPure is overestimated in existing works \citep{diffpure, robusteval, singlediffusion}. To improve the robustness of diffusion-based purification, we then propose a novel Adversarial Diffusion Bridge Model, termed ADBM. Unlike original diffusion models relying on the similarity between the diffused distributions of clean and adversarial examples for a balanced trade-off, ADBM constructs a direct reverse process (or ``bridge'') from the diffused adversarial data distribution to the distribution of clean examples, as shown in \cref{fig:bridge}(c). The theoretical analysis supports the superiority of ADBM over using original pre-trained diffusion models (DiffPure). In addition, we discuss how to accelerate ADBM for efficient adversarial purification to enhance the practicality of ADBM as a defense mechanism.

Experimental results show that ADBM achieved better robustness than DiffPure under reliable adaptive attacks. In particular, ADBM achieved a 4.4\% robustness gain compared with DiffPure on average on CIFAR-10 \citep{cifar10}, while the clean accuracies kept comparable. ADBM also demonstrated competitive performances with AT methods for seen adversarial threats and stood out among recent AP methods. Furthermore, ADBM exhibited much better adversarial robustness than AT methods when facing unseen threats. Additionally, the evaluation results against transfer-based and query-based attacks indicate the practicality of ADBM compared with existing methods.

Our main contributions can be summarized as follows:
\begin{itemize}
    \item We propose ADBM, a novel adversarial diffusion bridge model, as an efficient AP method. Additionally, we investigate methods to accelerate the proposed ADBM. 
    \item We conduct a theoretical analysis to illustrate the superiority of ADBM. 
    \item We develop a simple yet reliable adaptive attack evaluation method for diffusion-based purification methods. 
    \item Experiments in various settings validate the effectiveness of ADBM, highlighting its reliability and potential as a defense method for practical scenarios when compared with AT.
\end{itemize}

\section{Preliminary and Related Work}

\textbf{Diffusion models.} Given a data distribution $q(\rvx_0)$, DDPM \citep{ddpm} constructs a discrete-time Markov chain $\{\rvx_0,\ldots,\rvx_T\}$ as the forward process for $\rvx_0 \sim q(\rvx_0)$. Gaussian noise is gradually added to $\rvx_0$ during the forward process following a scaling schedule $\{\beta_0,\beta_1,\cdots,\beta_T\}$, where $\beta_0=0$ and $\beta_T \rightarrow 1$, such that $\rvx_T$ is near an isotropic Gaussian distribution:
\begin{equation}
\begin{aligned}
    &q(\rvx_{t}|\rvx_{t-1}):=\gN(\rvx_{t};\sqrt{1-\beta_{t}}\rvx_{t-1},\beta_t\rmI).
\end{aligned}
\end{equation}
Denote $\alpha_{t}:= 1-\beta_t$ and $\bar{\alpha}_{t} := \prod_{i=1}^{t}\alpha_i$, then 
\begin{equation}
\label{eq:forward}
\begin{aligned}
    q(\rvx_{t}|\rvx_{0}) &=\gN(\rvx_{t};\sqrt{\bar{\alpha}_{t}}\rvx_{0},(1-\bar{\alpha}_{t})\rmI), \ie, \rvx_t(\rvx_0,\bepsilon) =\sqrt{\bar{\alpha}_{t}}\rvx_{0}+\sqrt{1-\Bar{\alpha}_t}\bepsilon,\bepsilon\sim \gN(\vzero, \rmI).
\end{aligned}  
\end{equation}
To generate examples, the reverse distribution $q(\rvx_{t-1}|\rvx_t)$ should be learned by a model. But it is hard to achieve it directly. In practice, DDPM considers the conditional reverse distribution $q(\rvx_{t-1}|\rvx_t,\rvx_0)$ and uses $\rvx_{\theta}(\rvx_t, t)$ as an estimate of $\rvx_0$, where
\begin{equation}
\label{eq:xtheta}
    \rvx_{\theta}(\rvx_t, t) := (\rvx_t - \sqrt{1-\bar{\alpha}_t}\bepsilon_{\theta}(\rvx_t, t))/\sqrt{\bar{\alpha}_t}.
\end{equation}
For a given $\rvx_{0}$, the training loss $L_d$ of diffusion models is thus defined as
\begin{equation}
\label{eq:dloss}
\begin{aligned}
    L_d &= \E_{\bepsilon,t}\left[\|\bepsilon-\bepsilon_{\theta}(\sqrt{\bar{\alpha}_{t}}\rvx_{0}+\sqrt{1-\Bar{\alpha}_t}\bepsilon,t)\|^2\right].
\end{aligned}
\end{equation}
To accelerate the reverse process of DDPM, which typically involves hundreds of steps, \citet{ddim} propose a DDIM sampler based on the intuition that the multiple reverse steps can be performed at a single step via a non-Markov process. 
\citet{songscore} generalize the discrete-time diffusion model to continuous-time from the Stochastic Differential Equation (SDE) perspective. 

Another series of works seemly related to ours are diffusion bridges \citep{ddbm, ddbm2, ddbm3, ddbm4}, which enlarge the design space of diffusion models by interpolating between two paired distributions. However, ADBM and current diffusion bridge models have distinct motivations and designs. Current diffusion bridge models either map the data distribution to a simple prior distribution \citep{ddbm3} or directly create a Gaussian bridge \citep{ddbm} or an optimal transport bridge \citep{ddbm2, ddbm4} between two data distributions. The former approach \citep{ddbm3} fundamentally cannot be used for AP, while the latter methods \citep{ddbm, ddbm3, ddbm4} tend to a theoretical training collapse for the AP task. We give a detailed discussion to these works in \cref{appen:ddbm}.

\textbf{Diffusion models for adversarial robustness.} Recent studies have demonstrated the efficacy of diffusion models \citep{ddpm} in enhancing adversarial robustness in several ways. Some researches leverage much data generated by diffusion models to improve the AT performance \citep{gowalat, pangdiffusin}, but these AT-based methods do not generalize well under unseen threat models. DiffPure \citep{diffpure} employs a diffusion model as a plug-and-play pre-processing module to purify adversarial noise. \citet{guided} improved DiffPure by employing inputs to guide the reverse process of the diffusion model to ensure the purified examples are close to input examples. \citet{purifypp} improved DiffPure by incorporating the reverse SDE with multiple Langevin dynamic runs. \citet{zhang2024enhancing} maximized the evidence lower bound of the likelihood estimated by diffusion models to increase the likelihood of corrupted images. DiffPure has also shown potential in improving certified robustness within the framework of randomized smoothing \citep{carlini_certified, densepure}. Nevertheless, the practicality of randomized smoothing is greatly hindered by the time-consuming Monte Carlo sampling \citep{certified}. Different from these works, we present an diffusion-based purification method for empirical robustness in practical scenarios. In addition to these works, \citet{singlediffusion} show that a single diffusion model can be transformed into an adversarially robust diffusion classifier (RDC) using Bayes' rule. However, compared with diffusion-based purification, RDC requires thousands of times the inference cost and may not scale up, limiting its practical usefulness. We discuss it further in \cref{appen:rdc}.

\section{Reliable Evaluation for DiffPure}
\label{sec:reliable}

Before delving into the details of ADBM, it is important to discuss the white-box adaptive attack for diffusion-based purification first. As the original implementation of DiffPure needs dozens of prediction steps in the reverse process, it is challenging to compute the full gradient of the whole purification process due to memory constraints. To evaluate the robustness of diffusion-based purification, several techniques have been proposed. But we found that the adaptive evaluations for DiffPure remained insufficient, as detailed in \cref{appen:evaluation}. We build on these previous insights  \citep{diffpure, singlediffusion, robusteval} to develop a straightforward yet effective adaptive attack method against diffusion-based purification. We employ the gradient-based PGD attack \citep{pgd}, utilizing the full gradient calculation via gradient-checkpointing and incorporating a substantial number of EOT \citep{adaptive18} and iteration steps. Especially, we set the PGD iteration steps to 200, with 20 EOT samples for each iteration. We note that the cost of reliable evaluation is high yet worthwhile to avoid a false sense of security on DiffPure, as discussed in \cref{appen:evalcost}.
\cref{tab:reliable} shows the results of DiffPure with a DDPM sampler under different evaluations.\footnote{The DDPM sampler represents one discretization version of the reverse VP-SDE sampler \citep{songscore}, which is originally employed in DiffPure. Their differences become negligible at a large timestep, allowing us to use VP-SDE and DDPM interchangeably throughout our work.} Consistent with \citet{diffpure}, we conducted the adaptive attack three times on a subset of 512 randomly sampled images from the test set of CIFAR-10. The results demonstrate that our attack significantly reduced the reported robustness of DiffPure, lowering it from 71.29\% and 51.13\% to 45.83\% when compared with the originally reported results \citep{diffpure} and the recent best practice \citep{robusteval}, respectively. We found that increasing the number of iterations or EOT steps further did not lead to higher attack success rates.

\begin{table}[tbp]
\begin{minipage}{0.55\textwidth}
\vspace{-4mm}
    \centering
     \caption{Accuracies (\%) of DiffPure under various evaluation attacks with an $l_\infty$ bound of $\epsilon_\infty = 8/255$ on CIFAR-10. The timestep $T$ of DiffPure was $100$, following the original implementation.}
  \label{tab:reliable}%
\small
  \setlength{\tabcolsep}{4pt}
   {
    \begin{tabular}{c|cc}
		\hline
	  Evaluation & Clean Acc & Robust Acc \\
	  \hline
	  BPDA \citep{adaptive18} & $90.49 \pm 0.97$ & $81.40 \pm 0.16$ \\
	  \citet{diffpure}  &  $90.07 \pm 0.97$ & $71.29 \pm 0.55$\\
	  \citet{singlediffusion} &  $90.97$ & $53.52$ \\
	  \citet{robusteval} &  $90.43 \pm 0.60$ & $51.13 \pm 0.87$\\
	  \hline	  
	  Ours (EOT=20, steps=200) & \multirow{3}*{$90.49\pm0.97$} & $45.83 \pm 1.27 $ \\
        Ours (EOT=40, steps=200) & & $\textbf{45.64} \pm 1.14 $ \\
        Ours (EOT=20, steps=400) & & $46.16 \pm 1.33 $ \\
	 \hline

    \end{tabular}
    }
\end{minipage}%
\quad
  \begin{minipage}{0.4\textwidth}
    \centering
    \caption{Accuracies (\%) of DiffPure on CIFAR-10 with forward step 100, various reverse steps, and different samplers. The robust accuracy was evaluated under an $l_\infty$ bound of $\epsilon_\infty = 8/255$.}
  \label{tab:diffpure_step}%
\small
   \setlength{\tabcolsep}{4pt}
   {
    \begin{tabular}{c|cc|cc}
    \hline
    Reverse & \multicolumn{2}{c|}{DDPM} & \multicolumn{2}{c}{DDIM} \\
    \cline{2-5}
    Step & Clean & Robust & Clean & Robust \\
    
    \hline
    100 & $90.49$ & $\textbf{45.83}$ & $93.50$ & $41.21$ \\
    10 &  $84.77$ & $41.02$ & $92.18$ & $41.02$ \\
    5 & $68.75$ & $31.25$ & $92.38$ & $\textbf{42.16}$ \\
    2 & $29.10$ & - & $91.79$ & $41.02$ \\
    1 & $17.58$ & - & $91.80$ & $41.41$ \\
	  
    \hline

    \end{tabular}
    }

\end{minipage}
\end{table}

With the reliable evaluation method, we further investigated the influence of reverse steps and forward steps on the robustness of DiffPure. This exploration offers insights into the effectiveness of diffusion-based purification methods.

\textbf{Reverse steps.} DiffPure suffers from a high inference cost, which greatly hinders its practical application. The reverse step directly impacts the inference cost of DiffPure. Following \citet{robusteval}, we evaluated both the original DDPM sampler and the DDIM sampler. We present the results in \cref{tab:diffpure_step} using different reverse steps and samplers (standard deviation omitted, about 1\%). The results show that decreasing the number of reverse steps significantly reduced both clean accuracy and robustness when using DDPM. However, for the DDIM sampler, although clean accuracy slightly decreased with fewer reverse steps, robustness was not significantly affected. This can be attributed to the fact that DDIM does not introduce additional randomness during the reverse process.

\textbf{Forward steps.} We observed a continuous decrease in clean accuracy and a continuous increase in robustness as the number of forward steps increased. Details can be found in \cref{sec:appen_evaldiff}.

\vspace{-2mm}
\section{Adversarial Diffusion Bridge Model}
\label{sec:adbm}
ADBM aims to construct a reverse bridge directly from the diffused adversarial data distribution to the clean data distribution. We derive the training objective for ADBM in \cref{subsec:ADBM}, and explain how to obtain the adversarial noise for training ADBM in \cref{sec:advgeneration}. The AP inference process using ADBM is described in \cref{sec:inference}. We finally show that ADBM has good theoretical guarantees for AP.

\vspace{-2mm}
\subsection{Training Objective}
\label{subsec:ADBM}


ADBM is a diffusion model specifically designed for purifying adversarial noise. It adopts a forward process similar to DDPM, with the difference that ADBM assumes the existence of adversarial noise $\bepsilon_a$ at the starting point of the forward process during training. This means that the starting point of the forward process is $\rvx_0^a = \rvx_0 + \bepsilon_{a}$ for each $\rvx_0$. Thus, according to \cref{eq:forward}, the forward process can be represented as $\rvx_t^a = \sqrt{\Bar{\alpha}_t}\rvx_0^a + \sqrt{1-\Bar{\alpha}_t}\bepsilon, \bepsilon\sim \gN(\vzero, \rmI), 0\leq t \leq T$, where $T$ denotes the actual forward timestep when performing AP. $T$ is typically set to a lower value for AP, \eg, 100 in DiffPure, than that used in generative tasks, \eg, 1,000, to avoid completely corrupting $\rvx_0$. We discuss how to obtain $\bepsilon_{a}$ in \cref{sec:advgeneration}.

In the reverse process of ADBM, the objective is to learn a Markov chain $\{\hat{\rvx}_t\}_{t:T\to 0}$ that can directly transform from the diffused adversarial data distribution (\ie, $\rvx^a_T$) to the clean data distribution (\ie, $\rvx_0$), as shown in \cref{fig:bridge}(c). To achieve the goal of purification, we expect that the starting point and the end point of $\hat{\rvx}_t$ to be $\hat{\rvx}_T := \rvx^a_T$ and $\hat{\rvx}_0 := \rvx_0$, respectively. Notably, $\hat{\rvx}_T$ contains adversarial noise, while $\hat{\rvx}_0$ does not. To explicitly align the trajectory of $\hat{\rvx}_t$ with the starting and ending points, we introduce a coefficient $k_t$, expecting that $\hat{\rvx}_t := \rvx_t^d = \rvx_t^a - k_t\bepsilon_{a}$ for $0 \leq t \leq T$, with $k_0 = 1$ and $k_T = 0$.
With Bayes' rule and the property of Gaussian distribution,  we have
\begin{equation}
\label{eq:Bayes}
\begin{aligned}
&q(\rvx_{t-1}^{d}|\rvx_{t}^{d},\rvx_0) = \frac{q(\rvx_{t}^{d}|\rvx_{t-1}^{d},\rvx_0)\cdot q(\rvx_{t-1}^d|\rvx_0)}{q(\rvx_t^d|\rvx_0)}\propto \exp\left(-\frac{1}{2}(A(\rvx_{t-1}^d)^2+B\rvx_{t-1}^d+C)\right),\\
\end{aligned}
\end{equation}
where $A$ is a constant independent of $\bepsilon_{a}$, $B$ is the coefficient of $\rvx_{t-1}^d$ dependent on $\bepsilon_{a}$, and $C$ is a term without $\rvx_{t-1}^d$. 
In inference, we expect that $\bepsilon_{a}$ in \cref{eq:Bayes} can be eliminated since only $\rvx_0^a$ is given by attackers and $\bepsilon_{a}$ cannot be decoupled from $\rvx_0^a$ directly. Based on the property of Gaussian distribution, eliminating all terms related to $\bepsilon_{a}$ in $B$ and $C$ in \cref{eq:Bayes} can be achieved by eliminating all terms related to $\bepsilon_{a}$ in $B$. This yields:
\begin{equation}
\label{eq:elu}
	\frac{\sqrt{\alpha_t}(\sqrt{\alpha_t}k_{t-1}-k_t)\bepsilon_{a}}{1-\alpha_t}-\frac{(\sqrt{\bar{\alpha}_{t-1}}-k_{t-1})\bepsilon_{a}}{1-\bar{\alpha}_{t-1}}=0.
\end{equation}
To satisfy $k_0 = 1$, $k_T = 0$, we can derive $k_t$ in \cref{eq:elu} as:
\begin{equation}
\label{the:kt}
k_t = \sqrt{\bar{\alpha}_t}-\frac{\bar{\alpha}_T(1-\bar{\alpha}_t)}{\sqrt{\bar{\alpha}_t}(1-\bar{\alpha}_T)}, 0\leq t\leq T.
\end{equation}
For detailed derivations, please refer to \cref{appen:kt}. 
\par
Following \cref{eq:xtheta}, ADBM uses $\rvx_{\theta}(\rvx_t^d, t) := \frac{1}{\sqrt{\bar{\alpha}_t}}(\rvx_t^d - \sqrt{1-\bar{\alpha}_t}\bepsilon_{\theta}(\rvx_t^d, t))$ to approximate the clean example $\rvx_0$. Thus the loss of ADBM can be computed by
\begin{equation}
\begin{aligned}
L &= \E_{\bepsilon,t}\|\rvx_0-\rvx_{\theta}(\rvx_t^d, t)\|^2 = \E_{\bepsilon,t}\left\|\frac{k_t-\sqrt{\bar{\alpha}_t}}{\sqrt{\bar{\alpha}_t}}\bepsilon_a - \frac{\sqrt{1-\bar{\alpha}_t}}{\sqrt{\bar{\alpha}_t}}(\bepsilon-\bepsilon_{ \theta }(\rvx_t^d, t))\right\|^2\\
&= \E_{\bepsilon,t}\left[\frac{1-\bar{\alpha}_t}{\bar{\alpha}_t}\left\|\frac{\bar{\alpha}_T\sqrt{1-\bar{\alpha}_t}}{(1-\bar{\alpha}_T)\sqrt{\bar{\alpha}_t}}\bepsilon_a +\bepsilon-\bepsilon_{\theta}(\rvx_t^d, t)\right\|^2\right].
\end{aligned}
\end{equation}
Omitting $\frac{1-\bar{\alpha}_t}{\sqrt{\bar{\alpha}_t}}$ as in \citet{ddpm}, for a given $\rvx_0$, the final loss $L_b$ of ADBM is given by: 
\begin{equation}
\label{eq:adbmloss}
\begin{aligned}
L_b = 
&\E_{\bepsilon,t}\left[\left\|\frac{\bar{\alpha}_T\sqrt{1-\bar{\alpha}_t}}{(1-\bar{\alpha}_T)\sqrt{\bar{\alpha}_t}} \bepsilon_a +\bepsilon-\bepsilon_{\theta}(\rvx_t^d, t)\right\|^2\right],
\end{aligned}
\end{equation}
where $\rvx_t^d = \sqrt{\bar{\alpha}_t}\rvx_0+\sqrt{1-\bar{\alpha}_t}\bepsilon+\frac{\bar{\alpha}_T(1-\bar{\alpha}_t)}{\sqrt{\bar{\alpha}_t}(1-\bar{\alpha}_T)}\bepsilon_{a}$.
The training process of ADBM is detailed in the blue block of \cref{fig:training}. 

Comparing \cref{eq:dloss} and \cref{eq:adbmloss}, $L_d$ and $L_b$ are quite similar except that $L_b$ has two additional scaled $\bepsilon_a$ in both the input and prediction objective of $\bepsilon_{\theta}$. Thus in practice, the training of ADBM can fine-tune the pre-trained diffusion checkpoint with $L_b$, avoiding training from scratch.  As $t$ decreases to 0, the coefficient of $\bepsilon_a$ diminishes. This complies with the intuition that   $\bepsilon_a$ is gradually eliminated.

\begin{figure*}[!t]
  \centering
  \vspace{-2mm}
  \includegraphics[width=0.9\textwidth]{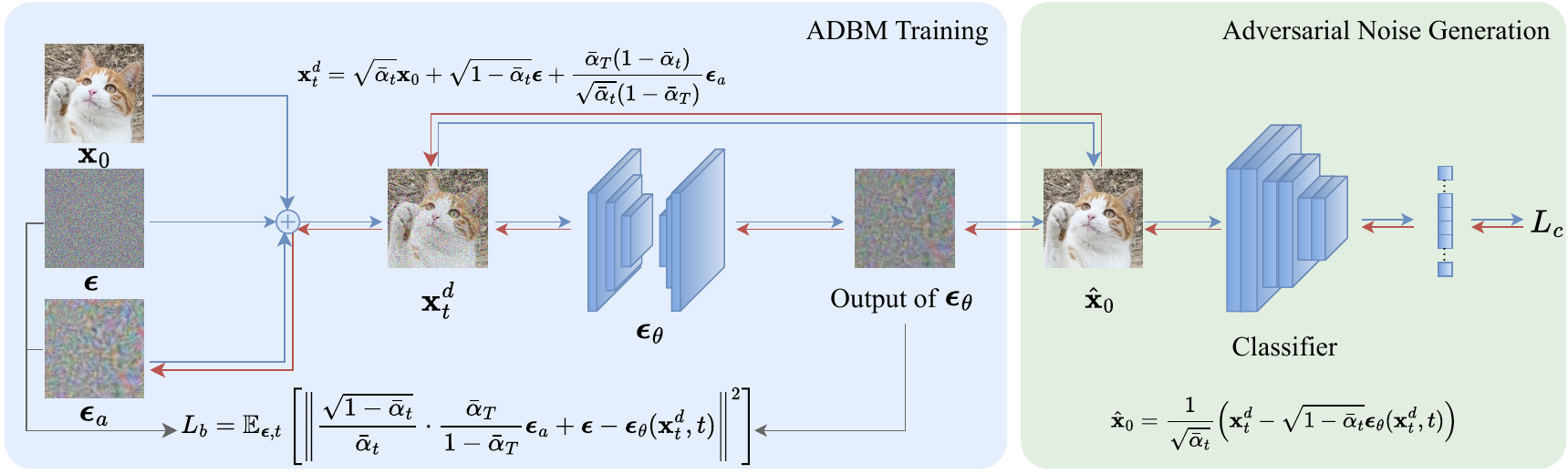}
  \vspace{-2mm}
  \caption{The illustration of ADBM training. The blue block represents the training objective, and the green block represents the extra module for adversarial noise generation. Black arrows denote the computation of $L_b$, blue arrows denote the computation of $L_c$ used for generating adversarial noise, and red arrows denote the direction of the gradient flow when calculating $\bepsilon_a$.}
  \label{fig:training}
\end{figure*}

\subsection{Adversarial Noise Generation}
\label{sec:advgeneration}

We now proceed to the generation of adversarial noise required for ADBM training. A straightforward method for generating adversarial noise can be maximizing the loss $L_b$ of ADBM. However, the idea of deriving adversarial noise directly from the diffusion model itself might not align well with the objectives of the purification task. In this context, the primary goal is to ensure the classifier's accuracy on images after they have been purified.


To better align with this goal, we propose to generate adversarial noise with the help of the classifier. During ADBM training, we input $\hat{\rvx}_0$ into the classifier, where $\hat{\rvx}_0 = \frac{1}{\sqrt{\bar{\alpha}_t}}(\rvx_t^d - \sqrt{1-\bar{\alpha}_t}\bepsilon_{\theta}(\rvx_t^d, t))$. The classification loss $L_c$ is then given by $L_c(f_{\theta_c}(\hat{\rvx}_0), y)$, where $y$ denotes the category label of $\rvx_0$ and $f_{\theta_c}$ denotes the classifier. Finally, we compute $\frac{\partial L_c}{\partial \bepsilon_a}$ to obtain $\bepsilon_a$ that maximizes $L_c$.
\cref{fig:training} illustrates this process with red arrows indicating the gradient flow direction for maximizing $L_c$. 
The process of first maximizing $L_c$ then minimizing $L_b$ shares a similar procedure with adversarial training, but we note that: 1) The parameters of the classifier should be kept fixed throughout this process, ensuring that ADBM retains the plug-and-play functionality (without modifying the model to be protected) of AP methods; 2) The introduction of $\epsilon_a$ should align with \cref{eq:adbmloss} to effectively address the trade-off in DiffPure between noise removal and data recovery. In addition, although \citet{atop} designed an impressive AT method named AToP for various AP methods, they claim that AToP is ineffective for diffusion models, highlighting the challenges of enhancing diffusion-based purification models.

Any gradient-based attack method generally can be used for maximizing $L_c$, but in practice, some particular proxy attack should be specified. Based on previous experiences, the PGD attack is a popular proxy that has a good generalization ability against various attacks during inference \citep{adaptive18, aa, pgd}, and thus we also used such proxy attack in the ADBM training. Our experiments also validate that this proxy can generalize to other attacks such as AutoAttack, C\&W \citep{cw}, and DeepFool \citep{deepfool} attacks. 
Note that during the training of ADBM, both $t$ and $\bepsilon$ in \cref{eq:adbmloss} are randomly sampled for each optimization step. This introduces a significant increase in computational cost when maximizing $L_c$ with PGD (EOT should be used). To address this issue, we propose an alternative method: instead of random sampling, we eliminate the randomness within a specific optimization step. In other words, for a given optimization step, we sample $t$ and $\bepsilon$ once and then fix their values throughout the entire training step (including generating $\bepsilon_a$ using methods like PGD and optimizing $\bepsilon_\theta$). The insight behind our approach is that maintaining randomness makes the maximization of $L_c$ via PGD challenging without EOT, and eliminating the randomness within a specific optimization step avoids the time-consuming EOT computation and ensures the calculation of adversarial noise that is truly effective for ADBM training. Note that we only eliminate randomness in the ADBM training. During inference, the randomness is kept, as discussed next.

\subsection{AP Inference of ADBM}
\label{sec:inference}

When using ADBM for AP, both the forward and reverse processes remain unchanged from the original pipeline of DiffPure, as shown in \cref{fig:bridge}(a). Any reverse samplers developed for diffusion models can be directly applied to the AP inference of ADBM without any modification, as ADBM only initiates the reverse process from a different starting point compared to traditional diffusion models. Therefore, to improve the practicality of ADBM, we can leverage fast sampling methods such as DDIM to accelerate the reverse process. As demonstrated in \cref{sec:reliable}, the DDIM sampler efficiently conducts AP, even with a single reverse step.

\subsection{Theoretical Analysis}
\label{sec:theo}
We provide two theorems to show the superiority of ADBM for adversarial purification. 
\begin{theorem}
\label{the:distance}
Given an adversarial example $\rvx_0^a$ and assuming the training loss $L_b \leq \delta$, the distance between the purified example of ADBM and the clean example $\rvx_0$, denoted as $\|\hat{\rvx}_0-\rvx_0\|$, is bounded by $\delta$ (constant omitted) in expectation when using a one-step DDIM sampler. Specifically, we have $\E_{\bepsilon}\left[\|\hat{\rvx}_0-\rvx_0\|^2\right] \leq \frac{(1-\bar{\alpha}_T)T}{{\bar{\alpha}_T}}\delta$, where $\frac{(1-\bar{\alpha}_T)T}{{\bar{\alpha}_T}}$ is the constant.
\end{theorem}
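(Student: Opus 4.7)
The plan is to express the purified output of a one-step DDIM sampler in closed form as a function of $\bepsilon_a$ and $\bepsilon$, match it up with the $t=T$ slice of the ADBM training loss $L_b$, and then conclude via the assumption $L_b\le\delta$.

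First, I would write down the one-step DDIM sampler applied to $\rvx_T^a$. By definition $\hat{\rvx}_0=\rvx_\theta(\rvx_T^a,T)=\frac{1}{\sqrt{\bar{\alpha}_T}}\bigl(\rvx_T^a-\sqrt{1-\bar{\alpha}_T}\,\bepsilon_\theta(\rvx_T^a,T)\bigr)$. Since $k_T=0$, we have $\rvx_T^d=\rvx_T^a$, and by the forward relation $\rvx_T^a=\sqrt{\bar{\alpha}_T}(\rvx_0+\bepsilon_a)+\sqrt{1-\bar{\alpha}_T}\,\bepsilon$. Substituting and simplifying yields
\begin{equation*}
\hat{\rvx}_0-\rvx_0=\bepsilon_a+\frac{\sqrt{1-\bar{\alpha}_T}}{\sqrt{\bar{\alpha}_T}}\bigl(\bepsilon-\bepsilon_\theta(\rvx_T^a,T)\bigr)=\frac{\sqrt{1-\bar{\alpha}_T}}{\sqrt{\bar{\alpha}_T}}\left(\frac{\sqrt{\bar{\alpha}_T}}{\sqrt{1-\bar{\alpha}_T}}\bepsilon_a+\bepsilon-\bepsilon_\theta(\rvx_T^a,T)\right).
\end{equation*}

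Next, I would recognize the factor inside the norm as exactly the $t=T$ integrand of $L_b$ in \cref{eq:adbmloss}: at $t=T$, the coefficient $\tfrac{\bar{\alpha}_T\sqrt{1-\bar{\alpha}_T}}{(1-\bar{\alpha}_T)\sqrt{\bar{\alpha}_T}}$ simplifies to $\tfrac{\sqrt{\bar{\alpha}_T}}{\sqrt{1-\bar{\alpha}_T}}$. Squaring norms and taking $\E_{\bepsilon}$ therefore gives
\begin{equation*}
\E_{\bepsilon}\bigl[\|\hat{\rvx}_0-\rvx_0\|^2\bigr]=\frac{1-\bar{\alpha}_T}{\bar{\alpha}_T}\,\E_{\bepsilon}\!\left[\left\|\tfrac{\sqrt{\bar{\alpha}_T}}{\sqrt{1-\bar{\alpha}_T}}\bepsilon_a+\bepsilon-\bepsilon_\theta(\rvx_T^a,T)\right\|^2\right]=\frac{1-\bar{\alpha}_T}{\bar{\alpha}_T}\,L_b\big|_{t=T}.
\end{equation*}

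Finally, since $L_b=\E_t\E_\bepsilon[\cdot]$ is a uniform average over $t\in\{1,\dots,T\}$ of non-negative terms, any individual slice is bounded by $T\cdot L_b\le T\delta$, so $L_b|_{t=T}\le T\delta$. Combining with the previous display yields $\E_\bepsilon[\|\hat{\rvx}_0-\rvx_0\|^2]\le\tfrac{(1-\bar{\alpha}_T)T}{\bar{\alpha}_T}\delta$, as claimed.

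The only delicate step is the third one: passing from the full $L_b$ bound to a bound on the single $t=T$ slice costs a factor of $T$, which is exactly the $T$ appearing in the final constant; this is the place where the derivation of $k_t$ from \cref{the:kt} (ensuring that the $t=T$ integrand genuinely governs the one-step DDIM error) really pays off, and I would make sure to state clearly that $t$ is sampled uniformly over $\{1,\dots,T\}$ so that this reduction is valid. The remaining algebra — rearranging the DDIM update and matching coefficients to \cref{eq:adbmloss} — is routine.
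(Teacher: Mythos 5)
Your proposal is correct and follows essentially the same route as the paper's own proof: unroll the one-step DDIM update to get $\hat{\rvx}_0-\rvx_0=\frac{\sqrt{1-\bar{\alpha}_T}}{\sqrt{\bar{\alpha}_T}}\bigl(\frac{\sqrt{\bar{\alpha}_T}}{\sqrt{1-\bar{\alpha}_T}}\bepsilon_a+\bepsilon-\bepsilon_\theta(\rvx_T^a,T)\bigr)$ using $k_T=0$, identify this with the $t=T$ integrand of $L_b$, and pay a factor of $T$ to extract that single slice from the uniform average over $t$. Your explicit flagging of the slice-extraction step as the source of the $T$ in the constant matches exactly what the paper does.
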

\begin{proof}
Please see the full proof in \cref{appen:dist}. 
\end{proof}
\cref{the:distance} implies that if the training loss of ADBM converges to zero, it can perfectly remove adversarial noises by employing a one-step DDIM sampler. While for DiffPure, we cannot derive such strong theoretical guarantee (The bound provided in Theorem 3.2 of \citet{diffpure} is larger than $\|\epsilon_a\|$ and thus cannot be zero). Moreover, the subsequent theorem demonstrates the superiority of ADBM over DiffPure.

\begin{theorem}
\label{pro:prob}
Denote the probability of reversing the adversarial example to the clean example using ADBM and DiffPure as $P(B)$ and $P(D)$, respectively. Then $P(\cdot)=\int \mathbbm{1}_{\{\rvx_0\notin \sD_a\}}p(\rvx_0|\hat{\rvx}_t)\dif \rvx_0$, where $\sD_a$ denotes the set of adversarial examples. 
If the timestep is infinite, the following inequality holds:
\begin{align*}
P(B) > P(D), 
\end{align*}
wherein
\begin{equation}
\label{eq:pb}
\begin{aligned}
\mathrm{for\,}P(B): p(\rvx_0|\hat{\rvx}_t) \propto \exp\left(-\frac{\|\rvx_t^d-\sqrt{\bar{\alpha}_t}\rvx_0^a\|^2}{2(1-\bar{\alpha}_t)}\right),
\end{aligned}
\end{equation}
\begin{equation}
\label{eq:pd}
\begin{aligned}
\mathrm{for\,}P(D): p(\rvx_0|\hat{\rvx}_t)\propto \exp\left(-\frac{\|\rvx_t^a-\sqrt{\bar{\alpha}_t}\rvx_0\|^2}{2(1-\bar{\alpha}_t)}\right).
\end{aligned}
\end{equation}

\end{theorem}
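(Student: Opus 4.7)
My plan is to reduce both \cref{eq:pb} and \cref{eq:pd} to Gaussian densities over $\rvx_0$ with a common covariance, identify their means, and then compare the resulting masses on $\sD_a^{c}$ via a translation-of-Gaussian argument. The role of the infinite-timestep hypothesis is to collapse a nuisance $\bepsilon_a$ term in the ADBM mean so that the latter is pushed outside $\sD_a$.

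I would first apply completion of the square to \cref{eq:pd}: viewing the exponent as a quadratic in $\rvx_0$ gives a Gaussian in $\rvx_0$ with mean $\mu_D=\rvx_t^a/\sqrt{\bar{\alpha}_t}$ and covariance $\Sigma=\tfrac{1-\bar{\alpha}_t}{\bar{\alpha}_t}\rmI$. Substituting $\rvx_t^a=\sqrt{\bar{\alpha}_t}\rvx_0^a+\sqrt{1-\bar{\alpha}_t}\bepsilon$ yields $\mu_D=\rvx_0^a+\sqrt{(1-\bar{\alpha}_t)/\bar{\alpha}_t}\bepsilon$, so DiffPure's posterior sits around the adversarial input $\rvx_0^a\in\sD_a$. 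I then repeat the same argument for \cref{eq:pb}, using the identification $\rvx_0^a=\rvx_0+\bepsilon_a$ from \cref{subsec:ADBM}: completing the square in $\rvx_0$ produces a Gaussian with the same covariance $\Sigma$ and mean $\mu_B=\rvx_t^d/\sqrt{\bar{\alpha}_t}-\bepsilon_a$. Plugging in the bridge formula $\rvx_t^d=\sqrt{\bar{\alpha}_t}\rvx_0+\sqrt{1-\bar{\alpha}_t}\bepsilon+\tfrac{\bar{\alpha}_T(1-\bar{\alpha}_t)}{\sqrt{\bar{\alpha}_t}(1-\bar{\alpha}_T)}\bepsilon_a$ from \cref{subsec:ADBM}, the coefficient of $\bepsilon_a$ in $\mu_B$ collapses to $\tfrac{\bar{\alpha}_T-\bar{\alpha}_t}{\bar{\alpha}_t(1-\bar{\alpha}_T)}$; as $T\to\infty$ and hence $\bar{\alpha}_T\to 0$ this coefficient tends to $-1$, so $\mu_B\to\rvx_0-\bepsilon_a+\sqrt{(1-\bar{\alpha}_t)/\bar{\alpha}_t}\bepsilon$ with $\mu_B-\mu_D\to-2\bepsilon_a$. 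Thus the ADBM mean is driven to the side of $\rvx_0$ opposite from $\rvx_0^a$ and, in particular, outside $\sD_a$.

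The final step is to compare $P(B)$ and $P(D)$. Both are Gaussian masses on $\sD_a^{c}$ with the same covariance $\Sigma$, differing only by a translation of the mean from $\mu_D\in\sD_a$ to $\mu_B\notin\sD_a$. Since $\sD_a$ is a neighborhood of $\rvx_0^a$ under the standard threat model used throughout the paper, translation-monotonicity of Gaussian mass for fixed covariance (the mass of a ball around a fixed centre decreases as the Gaussian's mean moves away from that centre) immediately gives $P(B)>P(D)$ in the limit.

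The hard part will be this last comparison without a regularity assumption on $\sD_a$: for a generic measurable set, no conclusion can be drawn from two bare Gaussian densities alone. The cleanest route is to take $\sD_a$ to be the $l_p$-ball $\{\rvx:\|\rvx-\rvx_0^a\|\le r\}$ of the adversarial threat model, in which case the comparison becomes elementary. The algebraic core of the argument, however, is the second step: the cancellation $\tfrac{\bar{\alpha}_T(1-\bar{\alpha}_t)}{\bar{\alpha}_t(1-\bar{\alpha}_T)}\to 0$ as $\bar{\alpha}_T\to 0$ is precisely what displaces the ADBM posterior off $\sD_a$, which is why the ``infinite timestep'' assumption is essential rather than merely technical.
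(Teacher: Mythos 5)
There is a genuine gap, and it centers on what ``infinite timestep'' means. In the paper's proof the hypothesis refers to the continuous-time (SDE) limit over the \emph{fixed} horizon $[0,T]$: infinitely many discretization steps are taken so that, by the time-reversal property of diffusions \citep{songscore}, the reverse chain $\{\hat{\rvx}_t\}$ has the same marginals as the forward chain, which is exactly what licenses writing $p(\rvx_0|\hat{\rvx}_t)\propto p(\rvx_0)\,p(\rvx_t|\rvx_0)$ and hence the two displayed Gaussian forms. Your proof skips this step entirely (you take the Gaussian posteriors as given) and instead reinterprets the hypothesis as $T\to\infty$, i.e.\ $\bar{\alpha}_T\to 0$. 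That reading contradicts the purification setting, where $T$ is deliberately small so that $\bar{\alpha}_T$ stays well away from $0$, and it also breaks the theorem's own mechanism: as $\bar{\alpha}_T\to 0$ the bridge coefficient satisfies $\sqrt{\bar{\alpha}_t}-k_t=\frac{\bar{\alpha}_T(1-\bar{\alpha}_t)}{\sqrt{\bar{\alpha}_t}(1-\bar{\alpha}_T)}\to 0$, so $k_t\to\sqrt{\bar{\alpha}_t}$ and the two residual terms that the paper compares, $k_t\bepsilon_a$ versus $\sqrt{\bar{\alpha}_t}\bepsilon_a$, coincide in the limit --- the strict inequality $P(B)>P(D)$ is obtained in the paper precisely from $0<k_t<\sqrt{\bar{\alpha}_t}$ at \emph{finite} $\bar{\alpha}_T$, with no limit in $\bar{\alpha}_T$ anywhere. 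Your limit $\mu_B-\mu_D\to-2\bepsilon_a$ (which places the ADBM posterior mean at $\rvx_0-\bepsilon_a$, just as far from the clean example as $\rvx_0^a$ is) is an artifact of combining the degenerate limit with the theorem's displayed exponent rather than the one actually used in the paper's derivation.

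For contrast, the paper's argument after establishing the posteriors is much more direct than a translation-of-Gaussian-mass comparison: it substitutes $\rvx_t^a=\sqrt{\bar{\alpha}_t}\rvx_0^a+\sqrt{1-\bar{\alpha}_t}\bepsilon$ and $\rvx_0^a=\rvx_0+\bepsilon_a$ into both exponents, reducing them to $\|\sqrt{1-\bar{\alpha}_t}\bepsilon+k_t\bepsilon_a\|$ for ADBM and $\|\sqrt{1-\bar{\alpha}_t}\bepsilon+\sqrt{\bar{\alpha}_t}\bepsilon_a\|$ for DiffPure, and concludes from $0<k_t<\sqrt{\bar{\alpha}_t}$ that the former is smaller in expectation over $\bepsilon$, so the ADBM integrand dominates. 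Your instinct that the comparison for a generic measurable $\sD_a$ needs a regularity assumption is a fair observation (the paper's own step from a comparison of expected norms to a comparison of the integrals is also not fully rigorous on this point), but to repair your proof you must (i) use the infinite-timestep hypothesis to justify the posterior forms via time reversal, and (ii) run the mean/exponent comparison at finite $\bar{\alpha}_T$ using $k_t<\sqrt{\bar{\alpha}_t}$, not in the limit $\bar{\alpha}_T\to 0$.
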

\begin{proof}
(sketch) \cref{eq:pb} and \cref{eq:pd} are derived using Bayes' rule, where $p(\rvx_0|\hat{\rvx}_t) \propto p(\rvx_0)p(\hat{\rvx}_t|\rvx_0)$. 
From the perspective of SDE, if timestep is infinite, $\{\rvx_t\}_{t:0\to T}$ and $\{\hat{\rvx}_t\}_{t:T\to 0}$ follow the same distribution \citep{songscore}. 
And given that $k_t < \sqrt{\bar{\alpha}}_t$ for any $1 \leq t\leq T$, the inequality $P(B)>P(D)$ always holds. 
Please see the full proof in \cref{appen:prob}.
\end{proof}
\cref{pro:prob} indicates that with infinite reverse timesteps,  adversarial examples purified with ADBM  are more likely to align with the clean data distribution than those with DiffPure.


\section{Experiments}
\subsection{Experimental Settings}
\label{sec:setting}

\textbf{Datasets and network architectures.} 
We conducted comprehensive experiments on popular datasets, including SVHN \citep{svhn}, CIFAR-10 \citep{cifar10}, and Tiny-ImageNet \citep{tinyimagenet}, together with a large-scale dataset ImageNet-100 \footnote{\url{https://www.kaggle.com/datasets/ambityga/imagenet100}. We cannot afford to conduct experiments on the full ImageNet-1K \citep{imagenet}. As an alternative, we used ImageNet-100, a curated subset of ImageNet-1K featuring 100 categories with the original resolution of ImageNet-1K.} All these datasets consist of RGB images, whose resolution is $32 \times 32$ for SVHN and CIFAR-10, $64 \times 64$ for Tiny-ImageNet, and $224 \times 224$ for ImageNet-100. 
We adopted the widely used WideResNet-28-10 (WRN-28-10), WRN-70-16, WRN-28-10, and ResNet-50 \citep{resnet} architectures as classifiers on SVHN, CIFAR-10, Tiny-ImageNet, and ImageNet-100, respectively. As for the diffusion models, we employed the UNet architecture \citep{unet} improved by \citet{songscore}, specifically, the \texttt{DDPM++ continuous} variant. Pre-trained diffusion checkpoints are required for DiffPure. We directly used the checkpoint provided by \citet{songscore} for CIFAR-10 and we used their code to train the checkpoints for other datasets. These trained checkpoints were used in DiffPure and served as baselines for ADBM.

\textbf{Fine-tuning settings of ADBM.}
The adversarial noise was computed in the popular norm-ball setting $\|\bepsilon_a\|_\infty \leq 8/255$. When computing $\bepsilon_a$, we used PGD with three iteration steps and a step size of $8/255$. Other settings followed the standard configuration used in \citet{songscore}. The fine-tuning steps were set to 30K, which is about 1/10 the training steps of the original diffusion models. In each fine-tuning step, the value of $T$ in \cref{eq:adbmloss} was uniformly sampled from 100 to 200. Note that when fine-tuning the diffusion models, the parameters of the classifier were kept frozen. Additional settings are provided in \cref{appen:setting}.


\textbf{Defense configurations of ADBM.}
Unless otherwise specified, the forward diffusion steps were set to be 100 for SVHN and CIFAR-10 and 150 for Tiny-ImageNet and ImageNet-100, respectively. The reverse sampling steps were set to be five. The reverse process used a DDIM sampler. These configurations were also applied to DiffPure for a fair comparison.


\begin{table*}[!ht]
  \centering
  \small
  \caption{Accuracies (\%) of methods under different adaptive attack threats on CIFAR-10. \textit{Average} denotes the average accuracies under three attack threats. \textit{Vanilla} denotes the vanilla model without any defense mechanism. The best results in each column for robust accuracy are highlighted.}
   \resizebox{\linewidth}{!}{
  \setlength{\tabcolsep}{1.4pt}
   {
    \begin{tabular}{ccc|cccc|c}
    \toprule
	  \multirow{2}*{Architecture}	& \multirow{2}*{Method} & \multirow{2}*{Type}  & \multirow{2}*{Clean Acc} & \multicolumn{4}{c}{Robust Acc} \\
   \cline{5-8}
	  	  &  & & & $l_\infty$ norm & $l_1$ norm & $l_2$ norm &  Average  \\
	  \midrule
	  WRN-70-16 & Vanilla & - & $97.02$ & $0.00$ & $0.00$ & $0.00$ & $0.00$ \\
	  \hline
      WRN-70-16 & \citep{gowal20} & \multirow{4}*{AT} & $91.10$ & $65.92$ & $8.26$ & $27.56$ & $33.91$ \\
	  WRN-70-16 & \citep{rebuffiat} & & $88.54$ & $64.26$ & $12.06$ & $32.29$ & $36.20$ \\
	  WRN-70-16 & Augment w/ Diff \citep{gowalat} & & $88.74$ & $66.18$ & $9.76$ & $28.73$ & $34.89$ \\
	  WRN-70-16 & Augment w/ Diff \citep{pangdiffusin} & & $93.25$ & $\textbf{70.72}$ & $8.48$ & $28.98$ & $36.06$ \\
	  \hline
	  MLP+WRN-28-10 & \citep{sslpurification} & \multirow{7}*{AP} & $91.89$ & $4.56$ & $8.68$ & $7.25$ & $6.83$ \\
	  UNet+WRN-70-16 & \citep{yoon} & & $87.93$ & $37.65$ & $36.87$ & $57.81$ & $44.11$ \\ 
        UNet+WRN-70-16 & DiffPure+Guide \citep{guided} & & $93.16$ & $22.07$ & $28.71$ & $35.74$ & $28.84$ \\
        UNet+WRN-70-16 & Diff+ScoreOpt \citep{zhang2024enhancing} & & $91.41$ & $13.28$ & $10.94$ & $28.91$ & $17.71$ \\ 
        UNet+WRN-70-16 & DiffPure+Langevin \citep{purifypp} & & $92.18$& \(43.75\) & \(39.84\) & \(55.47\) & \(46.35\) \\ 
	  UNet+WRN-70-16 & DiffPure \citep{diffpure} & & $92.5 \pm 0.5$ & $42.2 \pm 2.1$ & $44.3 \pm 1.3$ & $60.8 \pm 2.3$ & $49.1 \pm 1.7$  \\
	  UNet+WRN-70-16 & ADBM (Ours) & & $91.9 \pm 0.8$ & $47.7 \pm 2.2$ & $\textbf{49.6} \pm 2.2$ & $\textbf{63.3} \pm 1.9$ & $\textbf{53.5} \pm 2.1$  \\
	\bottomrule

    \end{tabular}
    }
    }

  \label{tab:cifar10}%
\end{table*}

\subsection{Robustness against White-Box Adaptive Attacks}
\label{sec:white}

We first evaluate ADBM against the reliable while-box adaptive attacks \citep{adaptive18} to show the worst-case adversarial robustness where the attacker has complete knowledge. Note that we expect a defense method to be robust not only on seen threats but also on unseen attack threats. Thus, unless otherwise specified, we evaluated the models on three attack threats: $l_\infty$, $l_1$, and $l_2$, with the bounds $\epsilon_\infty = 8/255$, $\epsilon_1 = 12$, $\epsilon_2 = 1$, respectively. Here $l_\infty$ attack is considered the seen threat as ADBM was trained with $l_\infty$ adversarial noise, while $l_1$ and $l_2$ attacks can be regarded as unseen threats.

We compared ADBM with SOTA AT and AP methods. All AT models were trained also with $l_\infty$ adversarial examples only, ensuring $l_1$ and $l_2$ threats were unseen for these models. We used AutoAttack \citep{aa}, which is recognized as the strongest attacks for AT methods, to implement the adversarial threats for AT methods, while we used the attack practice described in \cref{sec:reliable} (PGD with 200 iteration steps and 20 EOT samples) to implement the adversarial threats for AP methods. Additional configurations of these attacks can be found in \cref{appen:configure}. Note that this evaluation comparison is justified since \citet{robusteval} observed that AutoAttack with EOT yielded inferior attack performance compared to PGD with EOT for stochastic pre-processing defenses. 
Our experiments in \cref{appen:pgdvsaa} further confirmed that PGD with EOT is the best practice than other attacks such as AutoAttack, C\&W, and DeepFool with EOT.

The accuracies under the reliable white-box adaptive attacks on CIFAR-10 are shown in \cref{tab:cifar10}.
In this evaluation, we compared ADBM with several competitive methods.
Both \citet{gowalat} and \citet{pangdiffusin} performed AT with about 50M generated images by diffusion models. 
Additionally, we compared ADBM with several recent AP methods using generative models, especially diffusion models.\footnote{RDC \citep{singlediffusion} was not compared as it required thousands of times the inference cost of DiffPure so that we cannot afford to conduct experiments in our setting. Instead, we discuss it in \cref{appen:rdc}.}
By inspecting these results, we can see that despite training with millions of examples, AT methods still exhibited limited robustness against unseen $l_1$ and $l_2$ attacks. In contrast, ADBM demonstrates a strong defense against $l_1$ and $l_2$ attacks. Moreover, attempts to enhance the performance of DiffPure by introducing input guidance (DiffPure+Guide, \citet{guided}) or applying Langevin dynamics (DiffPure+Langevin, \citet{purifypp}) occasionally resulted in detrimental effects when assessed through reliable adaptive attack evaluations. Notably, ADBM outperformed DiffPure by achieving an average robustness gain of 4.4\% on CIFAR-10, while the clean accuracies kept comparable. Similar outcomes were observed for SVHN, Tiny-ImageNet, and ImageNet-100, detailed in \cref{appen:tinyimagenet}, reinforcing similar findings from the CIFAR-10 analyses.

\begin{table*}[!t]
  \centering
  \vspace{-1mm}
  \small
  \caption{Accuracies ($\%$) of methods under three query-based attacks and the transfer-based attack on SVHN. \textit{Average} denotes the average accuracies under four attacks. All attacks are performed with the $l_\infty$ bound $8/255$.}
  \vspace{-1mm}
   \setlength{\tabcolsep}{2.5pt}
   {
    \begin{tabular}{ccc|ccccc|c}
    \toprule
	  \multirow{2}*{Architecture}	& \multirow{2}*{Method} & \multirow{2}*{Type}  & \multirow{2}*{Clean Acc} & \multicolumn{5}{c}{Robust Acc} \\
        \cline{5-9}
	  	  &  & & & RayS & Square & SPSA & Transfer &  Average  \\
	  \midrule
	  WRN-28-10 & Vanilla & - & $98.11$ & $16.89$ & $9.08$ & $13.48$ & - & - \\
	  \hline
      WRN-28-10 & \citep{hat} & \multirow{3}*{AT} & $94.46$ & $68.90$ & $58.96$ & $74.22$ & $87.60$ & $76.83$ \\
	  WRN-28-10 & \citep{arowat} & & $93.00$ & $62.30$ & $60.49$ & $71.78$ & $87.60$ & $75.53$\\
	  WRN-28-10 & \citep{pangdiffusin} & &  $95.56$ & $75.16$ & $67.23$ & $80.39$ & $88.57$ & $81.38$\\
	  \hline
	  UNet+WRN-28-10 & DiffPure \citep{diffpure} & \multirow{2}*{AP} & $93.93$ & $92.97$ & $92.15$ & $92.19$ & $91.49$ & $92.20$\\
	  UNet+WRN-28-10 & ADBM (Ours) &  & $93.49$ & $\textbf{93.16}$ & $\textbf{93.32}$ & $\textbf{93.49}$ & $\textbf{92.88}$ & $\textbf{93.21}$\\
	\bottomrule

    \end{tabular}
    }
  \label{tab:svhnquery}%
\end{table*}

\subsection{Robustness against Black-Box Attacks}
\label{sec:black}


We then considered the more realistic black-box attacks, where the attacker has no knowledge about the defense mechanism, \ie, the purification model, and cannot access the gradients of models. Instead, the attacker can only query the model's output with query-based attacks or use substitute models with transfer-based attacks. 

On the seen threat (\ie, $l_\infty$ threat), we conducted three query-based attacks: RayS \citep{rays}, Square \citep{square}, and SPSA \citep{spsa} attacks. Square and RayS are efficient search-based attack methods, while SPSA approximates gradients in a black-box manner. For Square and RayS, we utilized 5,000 search steps. For SPSA, we set $\sigma$ to 0.001, the number of random samples in each step to 128, and the iteration step to 40. In addition, We implemented a transfer-based attack  by generating adversarial examples via the \textit{vanilla} models. The results under black-box attacks on SVHN and CIFAR-10 are shown in \cref{tab:svhnquery} and \cref{appen:querycifar10}, respectively. We can see that these black-box attacks achieved excellent attack performance on the vanilla model. For the SOTA AT method on SVHN, these attacks lowered the average accuracy to 81.38\%. But surprisingly, all these black-box attacks can hardly lower the accuracies of ADBM. Thus, we can conclude that besides the promising results on white-box attacks, under the realistic black-box attacks, ADBM has advantages over AT models even on the seen threat.

We further conducted experiments on more unseen threats beyond norm-bounded attacks, including patch-like attacks \citep{pifgsm, ncf} and recent diffusion-based attacks \citep{diffattack, diffpgd}, to evaluate the generalization ability of ADBM. The results shown in \cref{appen:unseen} indicate that ADBM demonstrates better generalization ability than DiffPure and AT methods against these black-box unseen threats.

\subsection{Ablation Study}
 The ablation studies were performed on CIFAR-10. The evaluation followed the setting in \cref{sec:white}.


\textbf{Reverse sampling steps.} We first investigated the influence of reverse sampling steps on the adversarial robustness of ADBM. The number of reverse steps is proportional to the inference cost. We used five reverse steps in the main experiments. Here we evaluated the robustness of ADBM with fewer steps to investigate whether the number of steps can be further reduced. The results in \cref{tab:ablation_steps} show ADBM was more robust than DiffPure regardless of reverse steps. Notably, even with just one reverse step, ADBM maintained its good robustness. We discuss the detailed inference cost related to this further in \cref{sec:conclusion}.


\textbf{Adversarial noise generation modes.} We then analyzed the impact of different adversarial noise generation modes for ADBM. As discussed in \cref{sec:advgeneration}, adversarial noise used for training ADBM can be generated by various modes. We analyzed the contributions of our three key design choices in the generation modes: using the classifier, fixing $t$, and fixing $\rvx$. The results shown in \cref{tab:ablation_mode} clearly demonstrate that all of these designs are essential for the success of ADBM. Removing any of these designs hampers the proper computation of adversarial noise for ADBM training.

\begin{table}[tbp]
\begin{minipage}{0.5\textwidth}
    \centering
    \caption{Accuracies (\%) of methods under different adaptive attack threats on CIFAR-10. The same conventions are used as in \cref{tab:svhn}.}
  \label{tab:ablation_steps}
  \small
  \setlength{\tabcolsep}{2.5pt}
   {
    \begin{tabular}{c|c|cccc|c}
    \toprule
	  \multirow{2}*{Method} & Reverse  & \multirow{2}*{Clean} & \multicolumn{4}{c}{Robust Acc} \\
        \cline{4-7}
	  	  & Step & & $l_\infty$ & $l_1$ & $l_2$ &  Average  \\
	  \midrule
	  \multirow{3}*{DiffPure} & 5 & $92.5$ & $42.2$ & $44.3$ & $60.8$ & $49.1$ \\
	   & 2 & $92.3$ & $42.7$ & $44.5$ & $60.9$ & $49.4$ \\
	   & 1 & $92.3$ & $43.0$ & $45.2$ & $59.6$ & $49.3$ \\
	  \hline
	  \multirow{3}*{ADBM} & 5 & $91.9$ & $\textbf{47.7}$ & $49.6$ & $\textbf{63.3}$ & $\textbf{53.5}$ \\
	   & 2 & $91.9$ & $\textbf{47.7}$ & $49.2$ & $63.2$ & $53.3$ \\
	   & 1 & $91.5$ & $45.7$ & $\textbf{50.7}$ & $61.9$ & $52.8$ \\
	\bottomrule

    \end{tabular}
    }
\end{minipage}%
\quad
  \begin{minipage}{0.45\textwidth}
    \centering
    \caption{Accuracies (\%) of ADBM with various adversarial noise generating modes on CIFAR-10. \textit{Cls} indicates whether or not to employ the classifier when generating adversarial noise for training.}
  \label{tab:ablation_mode}%
  \small
   \setlength{\tabcolsep}{2pt}
   {
    \begin{tabular}{ccc|ccc|c}
    \toprule
	  Fixing $t$ & Fixing $\rvx$ & Cls & $l_\infty$ & $l_1$ & $l_2$ & Average \\
	  \midrule
   	\checkmark & \checkmark & & $44.0$ & $44.9$ & $61.4$ & $ 50.1$ \\
	  & \checkmark & \checkmark & $44.3$ & $46.2$ & $60.2$ & $50.2$ \\
	  \checkmark & & \checkmark & $43.7$ & $45.6$ & $60.3$ & $ 50.2$ \\
	  \hline
	  \checkmark & \checkmark & \checkmark & $\textbf{47.7}$ & $\textbf{49.6}$ & $\textbf{63.3}$ & $\textbf{53.5}$ \\
	  
	\bottomrule

    \end{tabular}
    }

\end{minipage}
\end{table}

\textbf{Effectiveness on new classifiers.} To assess the effectiveness of ADBM on new classifiers, we conducted a study to investigate its transferability. Specifically, we utilized the fine-tuned ADBM checkpoint, trained with adversarial noise from a WRN-70-16 classifier, as the pre-processor for a WRN-28-10 model and a vision transformer model \citep{vit} directly, denoted as ADBM (Transfer). The results shown in \cref{appen:transfer} demonstrate that ADBM (Transfer) achieved robust accuracies comparable to ADBM directly trained with corresponding classifiers. This finding highlights the practicality of ADBM, as the fine-tuned ADBM model on a specific classifier can potentially be directly applied to a new classifier without requiring retraining. We guess this may be attributed to the transferability of adversarial noise \citep{mim}.

\section{Conclusion and Discussion}
\label{sec:conclusion}
In this work, we introduce ADBM, a cutting-edge method for diffusion-based adversarial purification. Theoretical analysis supports the superiority of ADBM in enhancing adversarial robustness. With extensive experiments, we demonstrated the effectiveness of ADBM across various scenarios using reliable adaptive attacks. Notably, ADBM demonstrates significant improvements over prevalent AT methods against unseen threats or black-box attacks. In the era of foundation models \citep{foundation}, training foundation models with AT becomes increasingly challenging due to high computational costs. ADBM, on the other hand, provides a promising alternative as a plug-and-play component that can enhance the robustness of existing foundation models without the burdensome of retraining. In addition, our discussion on the acceleration of diffusion-based purification methods may unlock the potential of diffusion-based purification methods in various real-time applications (see \cref{appen:inference}). The social impact of this work is further discussed in \cref{appen:impact}.


\textbf{Limitation.} 1) While ADBM requires quite few reverse steps, it does introduce additional computational  parameters due to its reliance on diffusion models compared with AT methods. The current UNet architectures, tailored primarily for generative purposes, are a bit large (see \cref{appen:inference}). Exploring the potential of downsizing these architectures for AP remains an open area. 2) While ADBM requires only about 1/10 training steps of original diffusion models, it does introduce additional fine-tuning cost compared with using the pre-trained diffusion models directly (DiffPure). But we note that whether the slight extra training cost is worthwhile depends on the specific problems being addressed. In many security-critical cases, where the training cost is not the primary concern, ADBM offers an alternative approach to further enhance robustness with affordable training costs.


\section*{Acknowledgement}

This work was supported by the National Natural Science Foundation of China (No. U2341228).

{\small
\bibliography{iclr2025_conference}

\begin{thebibliography}{64}
\providecommand{\natexlab}[1]{#1}
\providecommand{\url}[1]{\texttt{#1}}
\expandafter\ifx\csname urlstyle\endcsname\relax
  \providecommand{\doi}[1]{doi: #1}\else
  \providecommand{\doi}{doi: \begingroup \urlstyle{rm}\Url}\fi

\bibitem[Andriushchenko et~al.(2020)Andriushchenko, Croce, Flammarion, and Hein]{square}
Maksym Andriushchenko, Francesco Croce, Nicolas Flammarion, and Matthias Hein.
\newblock Square attack: {A} query-efficient black-box adversarial attack via random search.
\newblock In \emph{Eur. Conf. Comput. Vis. (ECCV)}, volume 12368, pp.\  484--501, 2020.

\bibitem[Athalye et~al.(2018)Athalye, Carlini, and Wagner]{adaptive18}
Anish Athalye, Nicholas Carlini, and David~A. Wagner.
\newblock Obfuscated gradients give a false sense of security: Circumventing defenses to adversarial examples.
\newblock In \emph{Int. Conf. Mach. Learn. (ICML)}, volume~80, pp.\  274--283, 2018.

\bibitem[Bommasani et~al.(2021)Bommasani, Hudson, Adeli, et~al.]{foundation}
Rishi Bommasani, Drew~A. Hudson, Ehsan Adeli, et~al.
\newblock On the opportunities and risks of foundation models.
\newblock \emph{arXiv preprint arXiv:2108.07258}, 2021.

\bibitem[Bortoli et~al.(2021)Bortoli, Thornton, Heng, and Doucet]{ddbm3}
Valentin~De Bortoli, James Thornton, Jeremy Heng, and Arnaud Doucet.
\newblock Diffusion schr{\"{o}}dinger bridge with applications to score-based generative modeling.
\newblock In \emph{Adv. Neural Inform. Process. Syst. (NeurIPS)}, pp.\  17695--17709, 2021.

\bibitem[Carlini \& Wagner(2017)Carlini and Wagner]{cw}
Nicholas Carlini and David Wagner.
\newblock Towards evaluating the robustness of neural networks.
\newblock In \emph{IEEE Symposium on Security and Privacy (S \& P)}, pp.\  39--57, 2017.

\bibitem[Carlini et~al.(2023)Carlini, Tram{\`{e}}r, Dvijotham, Rice, Sun, and Kolter]{carlini_certified}
Nicholas Carlini, Florian Tram{\`{e}}r, Krishnamurthy~(Dj) Dvijotham, Leslie Rice, Mingjie Sun, and J.~Zico Kolter.
\newblock (certified!!) adversarial robustness for free!
\newblock In \emph{Int. Conf. Learn. Rep. (ICLR)}, 2023.

\bibitem[Chen et~al.(2023{\natexlab{a}})Chen, Dong, Wang, Yang, Duan, Su, and Zhu]{singlediffusion}
Huanran Chen, Yinpeng Dong, Zhengyi Wang, Xiao Yang, Chengqi Duan, Hang Su, and Jun Zhu.
\newblock Robust classification via a single diffusion model.
\newblock \emph{arXiv preprint arXiv:2305.15241}, 2023{\natexlab{a}}.

\bibitem[Chen et~al.(2023{\natexlab{b}})Chen, Chen, Chen, Zhang, Zou, and Shi]{diffattack}
Jianqi Chen, Hao Chen, Keyan Chen, Yilan Zhang, Zhengxia Zou, and Zhenwei Shi.
\newblock Diffusion models for imperceptible and transferable adversarial attack.
\newblock \emph{arXiv preprint arXiv:2305.08192}, 2023{\natexlab{b}}.

\bibitem[Chen \& Gu(2020)Chen and Gu]{rays}
Jinghui Chen and Quanquan Gu.
\newblock Rays: {A} ray searching method for hard-label adversarial attack.
\newblock In \emph{{KDD}}, pp.\  1739--1747. {ACM}, 2020.

\bibitem[Chen et~al.(2016)Chen, Xu, Zhang, and Guestrin]{checkpointing}
Tianqi Chen, Bing Xu, Chiyuan Zhang, and Carlos Guestrin.
\newblock Training deep nets with sublinear memory cost.
\newblock \emph{arXiv preprint arXiv:1604.06174}, 2016.

\bibitem[Cohen et~al.(2019)Cohen, Rosenfeld, and Kolter]{certified}
Jeremy Cohen, Elan Rosenfeld, and Zico Kolter.
\newblock Certified adversarial robustness via randomized smoothing.
\newblock In \emph{Int. Conf. Mach. Learn. (ICML)}, pp.\  1310--1320, 2019.

\bibitem[Croce \& Hein(2020)Croce and Hein]{aa}
Francesco Croce and Matthias Hein.
\newblock Reliable evaluation of adversarial robustness with an ensemble of diverse parameter-free attacks.
\newblock In \emph{Int. Conf. Mach. Learn. (ICML)}, volume 119, pp.\  2206--2216, 2020.

\bibitem[Debenedetti et~al.(2023)Debenedetti, Sehwag, and Mittal]{light}
Edoardo Debenedetti, Vikash Sehwag, and Prateek Mittal.
\newblock A light recipe to train robust vision transformers.
\newblock In \emph{IEEE Conference on Secure and Trustworthy Machine Learning (SaTML)}, pp.\  225--253, 2023.

\bibitem[Dong et~al.(2018)Dong, Liao, Pang, Su, Zhu, Hu, and Li]{mim}
Yinpeng Dong, Fangzhou Liao, Tianyu Pang, Hang Su, Jun Zhu, Xiaolin Hu, and Jianguo Li.
\newblock Boosting adversarial attacks with momentum.
\newblock In \emph{IEEE Conf. Comput. Vis. Pattern Recog. (CVPR)}, pp.\  9185--9193, 2018.

\bibitem[Dosovitskiy et~al.(2021)Dosovitskiy, Beyer, Kolesnikov, Weissenborn, et~al.]{vit}
Alexey Dosovitskiy, Lucas Beyer, Alexander Kolesnikov, Dirk Weissenborn, et~al.
\newblock An image is worth 16x16 words: Transformers for image recognition at scale.
\newblock In \emph{Int. Conf. Learn. Rep. (ICLR)}, 2021.

\bibitem[Eykholt et~al.(2018)Eykholt, Evtimov, Fernandes, Li, Rahmati, Xiao, Prakash, Kohno, and Song]{realattack}
Kevin Eykholt, Ivan Evtimov, Earlence Fernandes, Bo~Li, Amir Rahmati, Chaowei Xiao, Atul Prakash, Tadayoshi Kohno, and Dawn Song.
\newblock Robust physical-world attacks on deep learning visual classification.
\newblock In \emph{IEEE Conf. Comput. Vis. Pattern Recog. (CVPR)}, pp.\  1625--1634, 2018.

\bibitem[Gao et~al.(2020)Gao, Zhang, Song, Liu, and Shen]{pifgsm}
Lianli Gao, Qilong Zhang, Jingkuan Song, Xianglong Liu, and Heng~Tao Shen.
\newblock Patch-wise attack for fooling deep neural network.
\newblock In \emph{Eur. Conf. Comput. Vis. (ECCV)}, volume 12373, pp.\  307--322, 2020.

\bibitem[Gao et~al.(2022)Gao, Shumailov, Fawaz, and Papernot]{random_limitation}
Yue Gao, Ilia Shumailov, Kassem Fawaz, and Nicolas Papernot.
\newblock On the limitations of stochastic pre-processing defenses.
\newblock In \emph{Adv. Neural Inform. Process. Syst. (NeurIPS)}, 2022.

\bibitem[Goodfellow et~al.(2015)Goodfellow, Shlens, and Szegedy]{goodfellow14}
Ian~J. Goodfellow, Jonathon Shlens, and Christian Szegedy.
\newblock Explaining and harnessing adversarial examples.
\newblock In \emph{Int. Conf. Learn. Rep. (ICLR)}, 2015.

\bibitem[Gowal et~al.(2020)Gowal, Qin, Uesato, Mann, and Kohli]{gowal20}
Sven Gowal, Chongli Qin, Jonathan Uesato, Timothy~A. Mann, and Pushmeet Kohli.
\newblock Uncovering the limits of adversarial training against norm-bounded adversarial examples.
\newblock \emph{arXiv preprint arXiv:2010.03593}, 2020.

\bibitem[Gowal et~al.(2021)Gowal, Rebuffi, Wiles, Stimberg, Calian, and Mann]{gowalat}
Sven Gowal, Sylvestre{-}Alvise Rebuffi, Olivia Wiles, Florian Stimberg, Dan~Andrei Calian, and Timothy~A. Mann.
\newblock Improving robustness using generated data.
\newblock In \emph{Adv. Neural Inform. Process. Syst. (NeurIPS)}, pp.\  4218--4233, 2021.

\bibitem[He et~al.(2016)He, Zhang, Ren, and Sun]{resnet}
Kaiming He, Xiangyu Zhang, Shaoqing Ren, and Jian Sun.
\newblock Deep residual learning for image recognition.
\newblock In \emph{IEEE Conf. Comput. Vis. Pattern Recog. (CVPR)}, pp.\  770--778, 2016.

\bibitem[Ho et~al.(2020)Ho, Jain, and Abbeel]{ddpm}
Jonathan Ho, Ajay Jain, and Pieter Abbeel.
\newblock Denoising diffusion probabilistic models.
\newblock In \emph{Adv. Neural Inform. Process. Syst. (NeurIPS)}, 2020.

\bibitem[Kingma \& Ba(2015)Kingma and Ba]{adam}
Diederik~P. Kingma and Jimmy Ba.
\newblock Adam: {A} method for stochastic optimization.
\newblock In \emph{Int. Conf. Learn. Rep. (ICLR)}, 2015.

\bibitem[Krizhevsky et~al.(2009)Krizhevsky, Hinton, et~al.]{cifar10}
Alex Krizhevsky, Geoffrey Hinton, et~al.
\newblock Learning multiple layers of features from tiny images.
\newblock \emph{California Institute of Technology}, 2009.

\bibitem[Laidlaw et~al.(2021)Laidlaw, Singla, and Feizi]{unseen2}
Cassidy Laidlaw, Sahil Singla, and Soheil Feizi.
\newblock Perceptual adversarial robustness: Defense against unseen threat models.
\newblock In \emph{Int. Conf. Learn. Rep. (ICLR)}, 2021.

\bibitem[Le \& Yang(2015)Le and Yang]{tinyimagenet}
Ya~Le and Xuan Yang.
\newblock Tiny imagine visual recognition challenge.
\newblock \emph{CS 231N, Stanford University}, 7\penalty0 (7):\penalty0 3, 2015.

\bibitem[Lee \& Kim(2023)Lee and Kim]{robusteval}
Minjong Lee and Dongwoo Kim.
\newblock Robust evaluation of diffusion-based adversarial purification.
\newblock In \emph{Int. Conf. Comput. Vis. (ICCV)}, pp.\  134--144, 2023.

\bibitem[Li et~al.(2024{\natexlab{a}})Li, Li, Hu, and Hu]{privacy}
Xiao Li, Qiongxiu Li, Zhanhao Hu, and Xiaolin Hu.
\newblock On the privacy effect of data enhancement via the lens of memorization.
\newblock \emph{IEEE Transactions on Information Forensics and Security}, 2024{\natexlab{a}}.

\bibitem[Li et~al.(2024{\natexlab{b}})Li, Liu, Dong, Qin, and Hu]{pinpp}
Xiao Li, Yining Liu, Na~Dong, Sitian Qin, and Xiaolin Hu.
\newblock Partimagenet++ dataset: Scaling up part-based models for robust recognition.
\newblock In \emph{Eur. Conf. Comput. Vis. (ECCV)}, pp.\  396--414, 2024{\natexlab{b}}.

\bibitem[Li et~al.(2024{\natexlab{c}})Li, Zhang, Liu, Hu, Zhang, and Hu]{zeroshot}
Xiao Li, Wei Zhang, Yining Liu, Zhanhao Hu, Bo~Zhang, and Xiaolin Hu.
\newblock Language-driven anchors for zero-shot adversarial robustness.
\newblock In \emph{IEEE Conf. Comput. Vis. Pattern Recog. (CVPR)}, pp.\  24686--24695, 2024{\natexlab{c}}.

\bibitem[Li et~al.(2025)Li, Chen, and Hu]{advod}
Xiao Li, Hang Chen, and Xiaolin Hu.
\newblock On the importance of backbone to the adversarial robustness of object detectors.
\newblock \emph{IEEE Transactions on Information Forensics and Security}, 2025.

\bibitem[Liang et~al.(2023)Liang, Wu, Hua, Zhang, Xue, Song, Xue, Ma, and Guan]{copyright}
Chumeng Liang, Xiaoyu Wu, Yang Hua, Jiaru Zhang, Yiming Xue, Tao Song, Zhengui Xue, Ruhui Ma, and Haibing Guan.
\newblock Adversarial example does good: Preventing painting imitation from diffusion models via adversarial examples.
\newblock In \emph{Int. Conf. Mach. Learn. (ICML)}, volume 202, pp.\  20763--20786, 2023.

\bibitem[Lin et~al.(2024)Lin, Li, Zhang, Tanaka, and Zhao]{atop}
Guang Lin, Chao Li, Jianhai Zhang, Toshihisa Tanaka, and Qibin Zhao.
\newblock Adversarial training on purification (atop): Advancing both robustness and generalization.
\newblock In \emph{Int. Conf. Learn. Rep. (ICLR)}, 2024.

\bibitem[Liu et~al.(2023{\natexlab{a}})Liu, Dong, Xiang, Yang, Su, Zhu, Chen, He, Xue, and Zheng]{liu2023}
Chang Liu, Yinpeng Dong, Wenzhao Xiang, Xiao Yang, Hang Su, Jun Zhu, Yuefeng Chen, Yuan He, Hui Xue, and Shibao Zheng.
\newblock A comprehensive study on robustness of image classification models: Benchmarking and rethinking.
\newblock \emph{arXiv preprint arXiv:2302.14301}, 2023{\natexlab{a}}.

\bibitem[Liu et~al.(2023{\natexlab{b}})Liu, Gong, and Liu]{ddbm4}
Xingchao Liu, Chengyue Gong, and Qiang Liu.
\newblock Flow straight and fast: Learning to generate and transfer data with rectified flow.
\newblock In \emph{Int. Conf. Learn. Rep. (ICLR)}, 2023{\natexlab{b}}.

\bibitem[Madry et~al.(2018)Madry, Makelov, Schmidt, Tsipras, and Vladu]{pgd}
Aleksander Madry, Aleksandar Makelov, Ludwig Schmidt, Dimitris Tsipras, and Adrian Vladu.
\newblock Towards deep learning models resistant to adversarial attacks.
\newblock In \emph{Int. Conf. Learn. Rep. (ICLR)}, 2018.

\bibitem[Moosavi-Dezfooli et~al.(2016)Moosavi-Dezfooli, Fawzi, and Frossard]{deepfool}
Seyed-Mohsen Moosavi-Dezfooli, Alhussein Fawzi, and Pascal Frossard.
\newblock Deepfool: a simple and accurate method to fool deep neural networks.
\newblock In \emph{IEEE Conf. Comput. Vis. Pattern Recog. (CVPR)}, pp.\  2574--2582, 2016.

\bibitem[Netzer et~al.(2011)Netzer, Wang, Coates, Bissacco, Wu, and Ng]{svhn}
Yuval Netzer, Tao Wang, Adam Coates, Alessandro Bissacco, Bo~Wu, and Andrew~Y Ng.
\newblock Reading digits in natural images with unsupervised feature learning.
\newblock \emph{NeurIPS Workshop}, 2011.

\bibitem[Nie et~al.(2022)Nie, Guo, Huang, Xiao, Vahdat, and Anandkumar]{diffpure}
Weill Nie, Brandon Guo, Yujia Huang, Chaowei Xiao, Arash Vahdat, and Animashree Anandkumar.
\newblock Diffusion models for adversarial purification.
\newblock In \emph{Int. Conf. Mach. Learn. (ICML)}, volume 162, pp.\  16805--16827, 2022.

\bibitem[Rade et~al.(2022)]{hat}
Rahul Rade et~al.
\newblock Reducing excessive margin to achieve a better accuracy vs. robustness trade-off.
\newblock In \emph{Int. Conf. Learn. Rep. (ICLR)}, 2022.

\bibitem[Rebuffi et~al.(2021)Rebuffi, Gowal, Calian, Stimberg, Wiles, and Mann]{rebuffiat}
Sylvestre{-}Alvise Rebuffi, Sven Gowal, Dan~Andrei Calian, Florian Stimberg, Olivia Wiles, and Timothy~A. Mann.
\newblock Data augmentation can improve robustness.
\newblock In \emph{Adv. Neural Inform. Process. Syst. (NeurIPS)}, pp.\  29935--29948, 2021.

\bibitem[Ronneberger et~al.(2015)Ronneberger, Fischer, and Brox]{unet}
Olaf Ronneberger, Philipp Fischer, and Thomas Brox.
\newblock U-net: Convolutional networks for biomedical image segmentation.
\newblock In \emph{MICCAI}, pp.\  234--241, 2015.

\bibitem[Russakovsky et~al.(2015)Russakovsky, Deng, Su, Krause, Satheesh, Ma, et~al.]{imagenet}
Olga Russakovsky, Jia Deng, Hao Su, Jonathan Krause, Sanjeev Satheesh, Sean Ma, et~al.
\newblock Imagenet large scale visual recognition challenge.
\newblock \emph{Int. J. Comput. Vis. (IJCV)}, 115\penalty0 (3):\penalty0 211--252, 2015.

\bibitem[Shi et~al.(2021)Shi, Holtz, and Mishne]{sslpurification}
Changhao Shi, Chester Holtz, and Gal Mishne.
\newblock Online adversarial purification based on self-supervised learning.
\newblock In \emph{Int. Conf. Learn. Rep. (ICLR)}, 2021.

\bibitem[Song et~al.(2020)Song, Meng, and Ermon]{ddim}
Jiaming Song, Chenlin Meng, and Stefano Ermon.
\newblock Denoising diffusion implicit models.
\newblock In \emph{Int. Conf. Learn. Rep. (ICLR)}, 2020.

\bibitem[Song et~al.(2021)Song, Sohl{-}Dickstein, Kingma, Kumar, Ermon, and Poole]{songscore}
Yang Song, Jascha Sohl{-}Dickstein, Diederik~P. Kingma, Abhishek Kumar, Stefano Ermon, and Ben Poole.
\newblock Score-based generative modeling through stochastic differential equations.
\newblock In \emph{Int. Conf. Learn. Rep. (ICLR)}, 2021.

\bibitem[Stutz et~al.(2019)Stutz, Hein, and Schiele]{distri1}
David Stutz, Matthias Hein, and Bernt Schiele.
\newblock Disentangling adversarial robustness and generalization.
\newblock In \emph{IEEE Conf. Comput. Vis. Pattern Recog. (CVPR)}, pp.\  6976--6987, 2019.

\bibitem[Szegedy et~al.(2014)Szegedy, Zaremba, Sutskever, Bruna, Erhan, Goodfellow, and Fergus]{adv13}
Christian Szegedy, Wojciech Zaremba, Ilya Sutskever, Joan Bruna, Dumitru Erhan, Ian~J. Goodfellow, and Rob Fergus.
\newblock Intriguing properties of neural networks.
\newblock In \emph{Int. Conf. Learn. Rep. (ICLR)}, 2014.

\bibitem[Tram{\`{e}}r et~al.(2020)Tram{\`{e}}r, Carlini, Brendel, and Madry]{adaptive20}
Florian Tram{\`{e}}r, Nicholas Carlini, Wieland Brendel, and Aleksander Madry.
\newblock On adaptive attacks to adversarial example defenses.
\newblock In \emph{Adv. Neural Inform. Process. Syst. (NeurIPS)}, 2020.

\bibitem[Uesato et~al.(2018)Uesato, O'Donoghue, Kohli, and van~den Oord]{spsa}
Jonathan Uesato, Brendan O'Donoghue, Pushmeet Kohli, and A{\"{a}}ron van~den Oord.
\newblock Adversarial risk and the dangers of evaluating against weak attacks.
\newblock In \emph{Int. Conf. Mach. Learn. (ICML)}, volume~80, pp.\  5032--5041, 2018.

\bibitem[Wang et~al.(2022)Wang, Lyu, Lin, Dai, and Fu]{guided}
Jinyi Wang, Zhaoyang Lyu, Dahua Lin, Bo~Dai, and Hongfei Fu.
\newblock Guided diffusion model for adversarial purification.
\newblock \emph{arXiv preprint arXiv:2205.14969}, 2022.

\bibitem[Wang et~al.(2023)Wang, Pang, Du, Lin, Liu, and Yan]{pangdiffusin}
Zekai Wang, Tianyu Pang, Chao Du, Min Lin, Weiwei Liu, and Shuicheng Yan.
\newblock Better diffusion models further improve adversarial training.
\newblock In \emph{Int. Conf. Mach. Learn. (ICML)}, volume 202, pp.\  36246--36263, 2023.

\bibitem[Xiao et~al.(2023)Xiao, Chen, Jin, Wang, Nie, Liu, Anandkumar, Li, and Song]{densepure}
Chaowei Xiao, Zhongzhu Chen, Kun Jin, Jiongxiao Wang, Weili Nie, Mingyan Liu, Anima Anandkumar, Bo~Li, and Dawn Song.
\newblock Densepure: Understanding diffusion models for adversarial robustness.
\newblock In \emph{Int. Conf. Learn. Rep. (ICLR)}, 2023.

\bibitem[Xie et~al.(2020)Xie, Tan, Gong, Wang, Yuille, and Le]{distri2}
Cihang Xie, Mingxing Tan, Boqing Gong, Jiang Wang, Alan~L Yuille, and Quoc~V Le.
\newblock Adversarial examples improve image recognition.
\newblock In \emph{IEEE Conf. Comput. Vis. Pattern Recog. (CVPR)}, pp.\  819--828, 2020.

\bibitem[Xue et~al.(2023)Xue, Araujo, Hu, and Chen]{diffpgd}
Haotian Xue, Alexandre Araujo, Bin Hu, and Yongxin Chen.
\newblock Diffusion-based adversarial sample generation for improved stealthiness and controllability.
\newblock In \emph{Adv. Neural Inform. Process. Syst. (NeurIPS)}, 2023.

\bibitem[Yang et~al.(2023)Yang, Kong, and Kim]{arowat}
Dongyoon Yang, Insung Kong, and Yongdai Kim.
\newblock Improving adversarial robustness by putting more regularizations on less robust samples.
\newblock In \emph{Int. Conf. Mach. Learn. (ICML)}, volume 202, pp.\  39331--39348, 2023.

\bibitem[Yoon et~al.(2021)Yoon, Hwang, and Lee]{yoon}
Jongmin Yoon, Sung~Ju Hwang, and Juho Lee.
\newblock Adversarial purification with score-based generative models.
\newblock In \emph{Int. Conf. Mach. Learn. (ICML)}, volume 139, pp.\  12062--12072, 2021.

\bibitem[Yuan et~al.(2022)Yuan, Zhang, Gao, Cheng, and Song]{ncf}
Shengming Yuan, Qilong Zhang, Lianli Gao, Yaya Cheng, and Jingkuan Song.
\newblock Natural color fool: Towards boosting black-box unrestricted attacks.
\newblock In \emph{Adv. Neural Inform. Process. Syst. (NeurIPS)}, 2022.

\bibitem[Zhang et~al.(2023)Zhang, Luo, and Zhang]{purifypp}
Boya Zhang, Weijian Luo, and Zhihua Zhang.
\newblock Purify++: Improving diffusion-purification with advanced diffusion models and control of randomness.
\newblock \emph{arXiv preprint arXiv:2310.18762}, 2023.

\bibitem[Zhang et~al.(2024)Zhang, Luo, and Zhang]{zhang2024enhancing}
Boya Zhang, Weijian Luo, and Zhihua Zhang.
\newblock Enhancing adversarial robustness via score-based optimization.
\newblock \emph{Adv. Neural Inform. Process. Syst. (NeurIPS)}, 36, 2024.

\bibitem[Zhang et~al.(2019)Zhang, Chen, Song, Boning, Dhillon, and Hsieh]{unseen1}
Huan Zhang, Hongge Chen, Zhao Song, Duane~S. Boning, Inderjit~S. Dhillon, and Cho{-}Jui Hsieh.
\newblock The limitations of adversarial training and the blind-spot attack.
\newblock In \emph{Int. Conf. Learn. Rep. (ICLR)}, 2019.

\bibitem[Zheng et~al.(2024)Zheng, He, Chen, Bao, and Zhu]{ddbm2}
Kaiwen Zheng, Guande He, Jianfei Chen, Fan Bao, and Jun Zhu.
\newblock Diffusion bridge implicit models.
\newblock \emph{arXiv preprint arXiv:2405.15885}, 2024.

\bibitem[Zhou et~al.(2024)Zhou, Lou, Khanna, and Ermon]{ddbm}
Linqi Zhou, Aaron Lou, Samar Khanna, and Stefano Ermon.
\newblock Denoising diffusion bridge models.
\newblock In \emph{Int. Conf. Learn. Rep. (ICLR)}, 2024.

\end{thebibliography}
\bibliographystyle{iclr2025_conference}
}


\renewcommand{\thefigure}{A\arabic{figure}}
\renewcommand{\thetable}{A\arabic{table}}
\setcounter{table}{0} 
\setcounter{figure}{0} 
\setcounter{theorem}{0} 
\appendix

\section{Discussions on Diffusion Bridges and Diffusion Classifiers}

\subsection{Diffusion Bridge Models}
\label{appen:ddbm}

\citet{ddbm3} is fundamentally unsuitable for the AP task because it bridges the data distribution with a simple prior distribution, while \citet{ddbm, ddbm3, ddbm4} tend to a training collapse for the AP task. Here we provide the theoretical explanation: Adversarial examples typically reside close to clean examples, particularly in the commonly analyzed $l_\infty$-bounded threat settings. This proximity poses a significant challenge to applying these bridge models \citep{ddbm, ddbm3, ddbm4} for AP, as these bridge models' merely predicting the input adversarial image could result in extremely small training loss, finally incurring a degenerated solution and losing the capability to purify the adversarial perturbations.

Different from these bridge models, ADBM first maps the adversarial data distribution to a diffused (Gaussian) adversarial data distribution and then creates a bridge from the diffused adversarial data distribution to the clean data distribution. This is distinct from existing diffusion bridges, effectively solving the problems that previous diffusion bridges face in purification. Our theoretical analysis in \cref{sec:theo} consolidates that ADBM guarantees superior performances for AP.


\subsection{Robust Diffusion Classifiers}
\label{appen:rdc}

\citet{singlediffusion} show that a single diffusion model can be transformed into an adversarially robust diffusion classifier (RDC) using Bayes' rule. Although RDC does not require an additional classifier, ADBM has two unique and significant advantages over RDC:

1) ADBM is an adversarial purification method, which can function as a preprocessing module in a plug-and-play manner without retraining the classifiers. This is a quite flexible and useful way, especially in the era of foundation models. For example, we may use ADBM as a preprocessing step to purify the visual inputs of a large vision-language model (VLM), without retraining the VLM with adversarial training (huge computational costs are saved). However, RDC does not follow this paradigm, as it is a robust classifier instead of a robust pre-processor. RDC is limited to generalize to tasks beyond robust classification, hindering its practical usefulness. Overall, ADBM offers a distinct advantage over RDC by decoupling preprocessing and the specific classification task, allowing each to be optimized independently.

2) While diffusion-based purification requires slightly more parameters (an additional classifier), ADBM requires significantly lower inference costs than RDC. Our following experiments indicate that, even with one reverse step, ADBM can maintain substantial robustness under reliable evaluation. In contrast, RDC requires N + T reverse steps, where T is the diffusion steps (\eg, 1000 in DDPM) and N is the number of likelihood maximization steps. The inference cost is proportional to the reverse steps, and thus the inference cost of RDC is several hundred times that of ADBM.

\section{Evaluations on DiffPure}

\subsection{Previous Evaluation Methods on DiffPure}
\label{appen:evaluation}

As the original implementation of DiffPure needs dozens of prediction steps in the reverse process, it is challenging to compute the full gradient of the whole purification process due to memory constraints. To circumvent computing the full gradient, \citet{diffpure} originally employed the \textit{adjoint} method (along with Expectation over Transformation (EOT) \citep{adaptive18}) to compute an approximate gradient. But recent works have identified it to be a flawed adaptive attack \citep{robusteval, singlediffusion}. As improved adaptive mechanisms, \citet{robusteval} proposed a \textit{surrogate} attack for DiffPure (to approximate the gradient of the iterative reverse procedure better) and \citet{singlediffusion} utilized the gradient-checkpointing technique \citep{checkpointing} to trade time for memory space and compute the full gradient directly. Despite previous efforts, we find that the adaptive evaluations for DiffPure remained insufficient. Specifically, the surrogate attack failed to compute the full gradient, and the gradient-checkpointing attack in \citet{singlediffusion} employed insufficient iteration steps or EOT samples, an issue underscored by \citet{random_limitation} which highlights the importance of adequate steps and samples for evaluating stochastic pre-processing defenses like DiffPure.

\subsection{The Cost of Our Evaluation}
\label{appen:evalcost}

In our evaluation, we used the full gradient of the whole reverse process and set the PGD iteration steps to 200, with 20 EOT samples for each iteration. The cost of the evaluation is quite high, especially in the context of the high reverse steps of the original DiffPure. To give a concrete example, for a single input image, the noise prediction model (\ie, $\bepsilon_{\theta}$) in the original DiffPure implementation with 100 reverse steps needs to be queried a total of $400,000$ times. 

However, we think that such high efforts are worthwhile as an unreliable evaluation could create a false sense of security on defenses. Historical evidence has shown that many defenses initially considered robust, were subsequently breached by more sophisticated and dedicated attacks \citep{adaptive18, adaptive20}. Our aim is to prevent a similar outcome for diffusion-based purification, advocating for the employment of a meticulous and reliable attack evaluation methodology, regardless of the expense involved.

On the other hand, with our reliable evaluation, we have investigated the influence of forward steps and reverse steps on the robustness of diffusion-based purification. Our following results in \cref{tab:diffpure_step} indicate that reverse steps will not significantly influence the robustness of DiffPure under the reliable evaluation. This suggests that diffusion-based purification techniques might not benefit from increasing the number of reverse steps to complicate the attack process, as such strategies could finally be neutralized by high-cost attacks similar to the one we have implemented. And when the reverse steps are reduced (\eg, five steps in our main experiments), the attack cost of our evaluation method is comparable to the most widely used AutoAttack benchmark \citep{aa}.

\subsection{Additional Investigation on DiffPure with Our Evaluation}
\label{sec:appen_evaldiff}

As shown in \cref{tab:diffpure_step}, we find that the reverse step does not significantly affect the robustness when using the DDIM sampler. Considering the computational cost of attacks, we fixed the reverse step to 5 with the DDIM sampler and focused on investigating the influence of forward steps on robustness. The results are shown in \cref{tab:diffpure_noise}. We observed a continuous decrease in clean accuracy and a continuous increase in robustness as the number of forward steps increased. This can be attributed to the introduction of more noise during the forward process.

\begin{table}[!t]
  \centering
  \small
  \caption{Accuracies of DiffPure on CIFAR-10 with reverse step 5, varying forward steps, and using DDIM sampler. The robust accuracy was evaluated under an $l_\infty$ bound of $\epsilon_\infty = 8/255$.}
   {
    \begin{tabular}{c|cccccc}
    \toprule
    Forward Step & 100 & 110 & 120 & 130 & 140 & 150 \\
    \midrule
    Clean Acc & $\textbf{92.38}$ & $91.21$ & $91.01$ & $89.84$ & $89.45$ & $87.89$ \\
    Robust Acc & $42.16$ & $43.75$ & $44.53$ & $47.85$ & $48.43$ & $\textbf{49.02}$  \\
	  
	 \bottomrule

    \end{tabular}
    }

  \label{tab:diffpure_noise}%
\end{table}

\section{Derivation of Equations and Proofs of Theorems}

\subsection{The Derivation of \cref{the:kt}}
\label{appen:kt}

\begin{proof}
Following the Bayes' rule, we have
\begin{equation}
\label{eq:bayes}
q(\rvx_{t-1}^{d}|\rvx_{t}^{d},\rvx_0) = \frac{q(\rvx_{t}^{d}|\rvx_{t-1}^{d},\rvx_0)\cdot q(\rvx_{t-1}^d|\rvx_0)}{q(\rvx_t^d|\rvx_0)}    	
\end{equation}
Since $\rvx_t^d=\rvx_t^a-k_t\bepsilon_{a}$ and $\rvx_t^a = \sqrt{\Bar{\alpha}_t}\rvx_0^a + \sqrt{1-\Bar{\alpha}_t}\bepsilon$, there is 
\begin{equation}
	\rvx_t^d=\sqrt{\bar{\alpha}_t}\rvx_0+\sqrt{1-\bar{\alpha}_t}\bepsilon+(\sqrt{\bar{\alpha}_t}-k_t)\bepsilon_{a}.
\end{equation}
Based on the property of Gaussian distribution, $p(\rvx_{t-1}^{d}|\rvx_{t}^{d},\rvx_0)$ also must be Gaussian distribution, thus,
\begin{equation}
\label{eq:Guassian}
\begin{aligned}
q(\rvx_{t-1}^{d}|\rvx_{t}^{d},\rvx_0) 
&= \frac{q(\rvx_{t}^{d}|\rvx_{t-1}^{d},\rvx_0)\cdot q(\rvx_{t-1}^d|\rvx_0)}{q(\rvx_t^d|\rvx_0)}\propto \exp (-\frac{1}{2}(\frac{(\rvx_t^d-\sqrt{\alpha_t}\rvx_{t-1}^d-(\sqrt{\alpha_t}k_{t-1}-k_t)\bepsilon_{a})^2}{1-\alpha_t}\\
&+\frac{(\rvx_{t-1}^d-\sqrt{\bar{\alpha}_{t-1}}\rvx_0-(\sqrt{\bar{\alpha}_{t-1}}-k_{t-1})\bepsilon_{a})^2}{1-\bar{\alpha}_{t-1}}-\frac{(\rvx_t^d-\sqrt{\bar{\alpha}_{t}}\rvx_0-(\sqrt{\bar{\alpha}_{t}}-k_t)\bepsilon_{a})^2}{1-\bar{\alpha}_{t}}))\\
&=\exp\left(-\frac{1}{2}(A(\rvx_{t-1}^d)^2+B\rvx_{t-1}^d+C(\rvx_0,\bepsilon_{a},\rvx_t^d))\right),
\end{aligned}
\end{equation}
where
\begin{equation}
\label{eq:AB2}
\begin{aligned}
&A = \frac{\alpha_t}{1-\alpha_t}+\frac{1}{1-\bar{\alpha}_{t-1}}=\frac{1-\bar{\alpha}_t}{(1-\alpha_t)(1-\bar{\alpha}_{t-1})},\\
&B=-2\sqrt{\alpha_t}\cdot\frac{\rvx_t^d-(\sqrt{\alpha_t}k_{t-1}-k_t)\bepsilon_{a}}{1-\alpha_t}-2\frac{\sqrt{\bar{\alpha}_{t-1}}\rvx_0+(\sqrt{\bar{\alpha}_{t-1}}-k_{t-1})\bepsilon_{a}}{1-\bar{\alpha}_{t-1}}. 
\end{aligned}   
\end{equation}
\par
In inference, we expect that $\bepsilon_{a}$ in \cref{eq:Guassian} can be eliminated since only $\rvx_0^a$ is given by attackers and $\bepsilon_{a}$ cannot be decoupled from $\rvx_0^a$ directly. Based on the property of Gaussian distribution, eliminating all terms related to $\bepsilon_{a}$ in $B$ and $C$ in \cref{eq:Bayes} can be achieved by eliminating all terms related to $\bepsilon_{a}$ in $B$. This yields:
\begin{equation}
\label{eq:Bzero}
	\frac{\sqrt{\alpha_t}(\sqrt{\alpha_t}k_{t-1}-k_t)\bepsilon_{a}}{1-\alpha_t}-\frac{(\sqrt{\bar{\alpha}_{t-1}}-k_{t-1})\bepsilon_{a}}{1-\bar{\alpha}_{t-1}}=0.
\end{equation}
\par
Although \cref{eq:Bzero} cannot be directly solved, we can derive its recurrent form. Let us set $k_t = \sqrt{\bar{\alpha}_t}\gamma_t$, where $0\leq t \leq T$ and $\gamma_0=1,\gamma_T=0$. 
The \cref{eq:Bzero} can deduce to 
\begin{equation}
\label{eq:Bgamma}
\frac{\sqrt{\alpha_t}}{1-\alpha_t}(\sqrt{\bar{\alpha}_t}\gamma_{t-1}-\sqrt{\bar{\alpha}_t}\gamma_t)=\frac{\sqrt{\bar{\alpha}_{t-1}}-\sqrt{\bar{\alpha}_{t-1}}\gamma_{t-1}}{1-\bar{\alpha}_{t-1}}.
\end{equation}
Then group the items according to timestep ($t$ and $t-1$): 
\begin{equation}
\label{eq:recurgamma}
	\left(\frac{\alpha_t}{1-\alpha_t}+\frac{1}{1-\bar{\alpha}_{t-1}}\right)\gamma_{t-1}=\frac{1}{1-\bar{\alpha}_{t-1}}+\frac{\alpha_t}{1-\alpha_t}\gamma_t.
\end{equation}
Now we have a recurrent equation about $\gamma_t$. The equivalence holds for all $0< t \leq T$. 
Therefore, by elucidating the relationship between $\gamma_t$ and the initial value $\gamma_1$, we can deduce an expression for each $\gamma_t$. 
If we reorganize items in \cref{eq:recurgamma}, it yields a recurrent equation:
\begin{equation}
\begin{aligned}
	\gamma_t-1 &= \frac{1-\bar{\alpha}_t}{\alpha_t-\bar{\alpha}_t}(\gamma_{t-1}-1)\\
	&= \prod_{i=2}^{t}\frac{1-\bar{\alpha}_i}{\alpha_i-\bar{\alpha}_i}(\gamma_1-1)\\
	&=\frac{\bar{\alpha}_1(1-\bar{\alpha}_t)}{\bar{\alpha}_t(1-\bar{\alpha}_1)}(\gamma_1-1).
\end{aligned}
\end{equation}

Since $\gamma_T=0$, we have the expression of $\gamma_1$ when $t=T$:
\begin{equation}
\begin{aligned}
	\gamma_T-1&=\frac{\bar{\alpha}_1(1-\bar{\alpha}_T)}{\bar{\alpha}_T(1-\bar{\alpha}_1)}(\gamma_1-1),\\
	\gamma_1 &= 1-\frac{\bar{\alpha}_T(1-\bar{\alpha}_1)}{\bar{\alpha}_1(1-\bar{\alpha}_T)}
\end{aligned}
\end{equation} 
and $\gamma_t$ generated from $\gamma_1$ is 
\begin{equation}
	\gamma_t=1-\frac{\bar{\alpha}_T(1-\bar{\alpha}_t)}{\bar{\alpha}_t(1-\bar{\alpha}_T)}.
\end{equation}
Recalling that $k_t = \sqrt{\bar{\alpha}_t}\gamma_t$, we thus have 
\begin{equation}
	k_t = \sqrt{\bar{\alpha}_t}\gamma_t = \sqrt{\bar{\alpha}_t}-\frac{\bar{\alpha}_T(1-\bar{\alpha}_t)}{\sqrt{\bar{\alpha}_t}(1-\bar{\alpha}_T)},\quad 0\leq t\leq T.
\end{equation}
\end{proof}

\subsection{The Proof of \cref{the:distance}}
\label{appen:dist}
\begin{theorem}
Given an adversarial example $\rvx_0^a$ and assuming the training loss $L_b \leq \delta$, the distance between the purified example of ADBM and the clean example $\rvx_0$, denoted as $\|\hat{\rvx}_0-\rvx_0\|$, is bounded by $\delta$ in expectation (constant omitted) when using a one-step DDIM sampler. Specifically, we have $\E_{\bepsilon}[\|\hat{\rvx}_0-\rvx_0\|^2] \leq \frac{(1-\bar{\alpha}_T)T}{{\bar{\alpha}_T}}\delta$, where $\frac{(1-\bar{\alpha}_T)T}{{\bar{\alpha}_T}}$ is the constant.
\end{theorem}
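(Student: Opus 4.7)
The plan is to unroll a single DDIM step starting from $\hat{\rvx}_T = \rvx_T^a$, rewrite the difference $\hat{\rvx}_0 - \rvx_0$ so that it is proportional to the residual that appears inside $L_b$ at $t=T$, and then bound that single-$t$ slice by the total loss. The three ingredients I need are (i) the one-step DDIM update $\hat{\rvx}_0 = (\hat{\rvx}_T - \sqrt{1-\bar{\alpha}_T}\,\bepsilon_{\theta}(\hat{\rvx}_T, T))/\sqrt{\bar{\alpha}_T}$, (ii) the identity $k_T = 0$ (which follows from \cref{the:kt} and makes the DDIM input $\hat{\rvx}_T$ coincide with the network input $\rvx_T^d$ used in the loss at $t=T$), and (iii) the fact that $t$ is drawn uniformly on $\{1,\ldots,T\}$ in the $L_b$ expectation, so that any particular $t$-slice is controlled by $T\,L_b$.

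First I would substitute $\hat{\rvx}_T = \sqrt{\bar{\alpha}_T}(\rvx_0 + \bepsilon_a) + \sqrt{1-\bar{\alpha}_T}\,\bepsilon$ into the DDIM update and subtract $\rvx_0$, which gives
\[
\hat{\rvx}_0 - \rvx_0 \;=\; \bepsilon_a + \frac{\sqrt{1-\bar{\alpha}_T}}{\sqrt{\bar{\alpha}_T}}\bigl(\bepsilon - \bepsilon_{\theta}(\rvx_T^d, T)\bigr).
\]
Factoring out $\sqrt{1-\bar{\alpha}_T}/\sqrt{\bar{\alpha}_T}$ and using the algebraic identity $\sqrt{\bar{\alpha}_T}/\sqrt{1-\bar{\alpha}_T} = \bar{\alpha}_T\sqrt{1-\bar{\alpha}_T}/((1-\bar{\alpha}_T)\sqrt{\bar{\alpha}_T})$, this becomes
\[
\hat{\rvx}_0 - \rvx_0 \;=\; \frac{\sqrt{1-\bar{\alpha}_T}}{\sqrt{\bar{\alpha}_T}}\left(\frac{\bar{\alpha}_T\sqrt{1-\bar{\alpha}_T}}{(1-\bar{\alpha}_T)\sqrt{\bar{\alpha}_T}}\bepsilon_a + \bepsilon - \bepsilon_{\theta}(\rvx_T^d, T)\right),
\]
so the bracket on the right is exactly the vector whose squared norm appears inside $L_b$ at timestep $T$ (see \cref{eq:adbmloss}).

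To close the argument, I square, take expectation over $\bepsilon$, and use that the $L_b$ expectation averages the non-negative squared residuals uniformly over $t \in \{1,\ldots,T\}$, so that the $t=T$ slice is at most $T\,L_b \leq T\delta$. This yields
\[
\E_{\bepsilon}\!\left[\|\hat{\rvx}_0 - \rvx_0\|^2\right] \;=\; \frac{1-\bar{\alpha}_T}{\bar{\alpha}_T}\,\E_{\bepsilon}\!\left[\left\|\tfrac{\bar{\alpha}_T\sqrt{1-\bar{\alpha}_T}}{(1-\bar{\alpha}_T)\sqrt{\bar{\alpha}_T}}\bepsilon_a + \bepsilon - \bepsilon_{\theta}(\rvx_T^d, T)\right\|^2\right] \;\leq\; \frac{(1-\bar{\alpha}_T)\,T}{\bar{\alpha}_T}\,\delta,
\]
which is exactly the claim.

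I do not foresee any genuine obstacle: the only careful step is verifying the coefficient identity so that the bracket reproduces the $L_b$ integrand verbatim, and noting the $k_T = 0$ fact that makes $\hat{\rvx}_T = \rvx_T^d$ at the top of the reverse chain. A subtle point worth flagging in the final write-up is that the bound is tight only for the one-step DDIM sampler; a multi-step version would require propagating errors through all intermediate timesteps, which the theorem deliberately does not attempt. Also, the passage from $\E_{\bepsilon}$ at $t=T$ to $T\,L_b$ implicitly assumes uniform sampling of $t$ and non-negativity of the per-$t$ expectations, both of which hold by construction of $L_b$.
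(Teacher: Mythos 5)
Your proposal is correct and follows essentially the same route as the paper's own proof: unroll the one-step DDIM update from $\hat{\rvx}_T=\rvx_T^a$, use $k_T=0$ to identify $\rvx_T^a$ with $\rvx_T^d$, recognize the resulting residual as the $t=T$ integrand of $L_b$ (your coefficient identity just keeps the \cref{eq:adbmloss} form where the paper simplifies it to $\sqrt{\bar{\alpha}_T}/\sqrt{1-\bar{\alpha}_T}$), and bound that slice by $T\delta$ via uniform sampling of $t$. No gaps.
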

\begin{proof}
This inequality holds when we use the DDIM reverse sampler and set the reverse step $s= 1$. According to \citet{ddim}, the reverse process of DDIM is 
\begin{equation}
	\hat{\rvx}_{\tau_{i-1}}=\sqrt{\bar{\alpha}_{\tau_{i-1}}}\left(\frac{\hat{\rvx}_{\tau_{i}}-\sqrt{1-\bar{\alpha}_{\tau_{i}}}\bepsilon_{\theta}(\hat{\rvx}_{\tau_{i}}, \tau_{i})}{\sqrt{\bar{\alpha}_{\tau_{i}}}}\right)+\sqrt{1-\bar{\alpha}_{\tau_{i-1}}}\bepsilon_{\theta}(\hat{\rvx}_{\tau_{i}}, \tau_{i}).
\end{equation}
In the discrete case \citep{ddpm}, $\{\tau_0,\ldots,\tau_s\}$ is a linearly increasing sub-sequence of $\{0,\ldots, T\}$, $\tau_0=0,\tau_s=T$. In the continuous case \citep{songscore}, $\{\tau_0,\ldots,\tau_s\}$ is a linearly increasing sequence in $[0,T]\subset [0.0,1.0]$, $\tau_0=0,0\leq\tau_s=T\leq1$. $T$ is determined in the forward process. Setting the number of reverse steps $s=1$, then $\tau_s=T,\tau_{s-1}=\tau_0=0$, and it yields:
\begin{equation}
    \hat{\rvx}_{\tau_{s-1}}=\sqrt{\bar{\alpha}_{\tau_{s-1}}}\left(\frac{\hat{\rvx}_{\tau_{s}}-\sqrt{1-\bar{\alpha}_{\tau_{s}}}\bepsilon_{\theta}(\hat{\rvx}_{\tau_{s}}, \tau_{s})}{\sqrt{\bar{\alpha}_{\tau_{s}}}}\right)+\sqrt{1-\bar{\alpha}_{\tau_{s-1}}}\bepsilon_{\theta}(\hat{\rvx}_{\tau_{s}}, \tau_{s}),
\end{equation}
since $\hat{\rvx}_{\tau_{s}}=\hat{\rvx}_{T}=\rvx_T^a$, we have
\begin{equation}
\label{eq:ddim}
\hat{\rvx}_{0}=\sqrt{\bar{\alpha}_{0}}\left(\frac{\rvx_{T}^a-\sqrt{1-\bar{\alpha}_{T}}\bepsilon_{\theta}(\rvx_{T}^a,T)}{\sqrt{\bar{\alpha}_{T}}}\right)+\sqrt{1-\bar{\alpha}_{0}}\bepsilon_{\theta}(\rvx_{T}^a,T),
\end{equation}
where since we use one-step reverse process, we reuse the notation of the final point of reverse process $\hat{\rvx}_{0}$ to represent the prediction of $\rvx_0$ from $\rvx_T^a$. Since $\sqrt{\bar{\alpha}_{0}}=1$, $ \rvx_T^a = \sqrt{\Bar{\alpha}_T}\rvx_0^a + \sqrt{1-\Bar{\alpha}_T}\bepsilon$, $\rvx_0^a = \rvx_0 + \bepsilon_a$, \cref{eq:ddim} can be written as
\begin{equation}
\begin{aligned}
    \hat{\rvx}_{0}
    &=\rvx_{0}^a+\frac{\sqrt{1-\bar{\alpha}_{T}}}{\sqrt{\bar{\alpha}_{T}}}(\bepsilon-\bepsilon_{\theta}(\rvx_{T}^a,T))\\
    &=\rvx_{0} +\frac{\sqrt{1-\bar{\alpha}_{T}}}{\sqrt{\bar{\alpha}_{T}}}(\frac{\sqrt{\bar{\alpha}_{T}}}{\sqrt{1-\bar{\alpha}_{T}}}\bepsilon_{a}+\bepsilon-\bepsilon_{\theta}(\rvx_{T}^a,T)).
\end{aligned}
\end{equation}
Therefore, the distance between $\hat{\rvx}_{0}$ and $\rvx_{0}$ is
\begin{equation}
\begin{aligned}
\|\hat{\rvx}_{0}-\rvx_{0}\|
&= \frac{\sqrt{1-\bar{\alpha}_{T}}}{\sqrt{\bar{\alpha}_{T}}}\left\|\frac{\sqrt{\bar{\alpha}_{T}}}{\sqrt{1-\bar{\alpha}_{T}}}\bepsilon_{a}+\bepsilon-\bepsilon_{\theta}(\rvx_{T}^a,T)\right\|\\
&= \frac{\sqrt{1-\bar{\alpha}_{T}}}{\sqrt{\bar{\alpha}_{T}}}\left\|\frac{\sqrt{\bar{\alpha}_{T}}}{\sqrt{1-\bar{\alpha}_{T}}}\bepsilon_{a}+\bepsilon-\bepsilon_{\theta}(\rvx_{T}^d,T)\right\|,
\end{aligned}
\end{equation}
where the second equivalence holds due to $t=T$ and $\rvx_T^d = \rvx_T^a-k_T\bepsilon_a = \rvx_T^a$. 
Considering that 
\begin{equation}
\begin{aligned}
L_b
&=\E_{\bepsilon,t}\left[\left\|\frac{\sqrt{1-\bar{\alpha}_t}}{\sqrt{\bar{\alpha}_t}}\cdot \frac{\bar{\alpha}_t}{1-\bar{\alpha}_t}\bepsilon_a +\bepsilon-\bepsilon_{\theta}(\rvx_t^d, t)\right\|^2\right]\\
&=\frac{1}{T}\sum_{t=0}^{T}\E_{\bepsilon}\left[\left\|\frac{\sqrt{1-\bar{\alpha}_t}}{\sqrt{\bar{\alpha}_t}}\cdot \frac{\bar{\alpha}_t}{1-\bar{\alpha}_t}\bepsilon_a +\bepsilon-\bepsilon_{\theta}(\rvx_t^d, t)\right\|^2\right] \\
&\leq \delta,
\end{aligned}
\end{equation}
thus
\begin{equation}
\E_{\bepsilon}\left[\left\|\frac{\sqrt{\bar{\alpha}_{T}}}{\sqrt{1-\bar{\alpha}_{T}}}\bepsilon_{a}+\bepsilon-\bepsilon_{\theta}(\rvx_{T}^d,T)\right\|^2\right] \leq T\cdot\delta
\end{equation}
Then we have 
\begin{equation}
\begin{aligned}
    \E_{\bepsilon}\left[\left|\hat{\rvx}_{0}-\rvx_{0}\right\|^2\right]
    &= \E_{\bepsilon}\left[\frac{{1-\bar{\alpha}_{T}}}{{\bar{\alpha}_{T}}}\left\|\frac{\sqrt{\bar{\alpha}_{T}}}{\sqrt{1-\bar{\alpha}_{T}}}\bepsilon_{a}+\bepsilon-\bepsilon_{\theta}(\rvx_{T}^d,T)\right\|^2\right]\\
    & \leq \frac{(1-\bar{\alpha}_{T})T}{{\bar{\alpha}_{T}}} \delta.
\end{aligned}
\end{equation}
\end{proof}

\subsection{The Proof of \cref{pro:prob}}
\label{appen:prob}

\begin{theorem}
Denote the probability of reversing the adversarial example to the clean example using ADBM and DiffPure as $P(B)$ and $P(D)$, respectively. Then $P(\cdot)$ can be computed as  $P(\cdot)=\int \mathbbm{1}_{\{\rvx_0\notin \sD_a\}}p(\rvx_0|\hat{\rvx}_t)\dif \rvx_0$, where $\sD_a$ denotes the set of adversarial examples. 
If the timestep is infinite, the following inequality holds:
\begin{align*}
P(B) > P(D),
\end{align*}
wherein
\begin{equation}
\begin{aligned}
\mathrm{for\,}P(B): p(\rvx_0|\hat{\rvx}_t) \propto \exp\left(-\frac{\|\rvx_t^d-\sqrt{\bar{\alpha}_t}\rvx_0^a\|^2}{2(1-\bar{\alpha}_t)}\right),
\end{aligned}
\end{equation}
\begin{equation}
\begin{aligned}
\mathrm{for\,}P(D): p(\rvx_0|\hat{\rvx}_t)\propto \exp\left(-\frac{\|\rvx_t^a-\sqrt{\bar{\alpha}_t}\rvx_0\|^2}{2(1-\bar{\alpha}_t)}\right).
\end{aligned}
\end{equation}
\end{theorem}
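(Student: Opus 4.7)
My plan is to follow the three steps indicated in the sketch: (i) derive the two claimed posterior expressions via Bayes' rule, (ii) use the forward/reverse SDE duality at the infinite-timestep limit to convert the reverse-process law into a tractable forward-process density, and (iii) compare the resulting integrals using the strict bound $k_t < \sqrt{\bar{\alpha}_t}$, which follows immediately from \cref{the:kt} since the subtracted term $\frac{\bar{\alpha}_T(1-\bar{\alpha}_t)}{\sqrt{\bar{\alpha}_t}(1-\bar{\alpha}_T)}$ is strictly positive for $1 \le t \le T$.

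First, I would write $p(\rvx_0 \mid \hat{\rvx}_t) \propto p(\rvx_0)\,p(\hat{\rvx}_t \mid \rvx_0)$ and substitute the forward kernel of each method. For DiffPure, the reverse chain is the vanilla denoising diffusion of \cref{eq:forward} initialised at $\rvx_T^a$, so its kernel is simply $\gN(\sqrt{\bar{\alpha}_t}\rvx_0,\,(1-\bar{\alpha}_t)\rmI)$, yielding \cref{eq:pd} directly. For ADBM, the bridge transitions were constructed in \cref{subsec:ADBM} precisely so that the $\bepsilon_a$ contribution is eliminated from the update; hence the reverse sampler effectively treats $\hat{\rvx}_t$ as if it were drawn from the standard diffusion of $\rvx_0^a$, giving the kernel centred at $\sqrt{\bar{\alpha}_t}\rvx_0^a$ that is \cref{eq:pb}. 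Then I would invoke the continuous-time result of \citet{songscore} that, in the infinite-timestep limit, the reverse SDE induces the same marginal law at each $t$ as the forward SDE, which legitimises plugging these forward densities into $P(\cdot) = \int \mathbbm{1}_{\{\rvx_0 \notin \sD_a\}}\, p(\rvx_0 \mid \hat{\rvx}_t)\,\dif\rvx_0$.

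Finally, I would compare $P(B)$ with $P(D)$ directly. The key observation is that after completing squares, the ADBM posterior mean deviates from $\sqrt{\bar{\alpha}_t}\rvx_0$ by only $(\sqrt{\bar{\alpha}_t}-k_t)\bepsilon_a$, whereas the DiffPure mean deviates by the larger $\sqrt{\bar{\alpha}_t}\bepsilon_a$; since $k_t < \sqrt{\bar{\alpha}_t}$, the ADBM posterior is more concentrated around clean $\rvx_0$ and therefore assigns strictly smaller mass to the adversarial tube $\sD_a$ centred at $\rvx_0^a$, so its complement integral $P(B)$ is strictly larger than $P(D)$. The principal obstacle I anticipate is converting this mean-shift comparison into a rigorous inequality between the two indicator integrals: the cleanest route seems to be to express $P(B) - P(D)$ as a single integral over $\rvx_0$ and to apply a monotonicity/translation argument for Gaussians, while taking care that $\sD_a$ is defined relative to $\rvx_0^a$ so that shrinking the deviation genuinely pulls mass out of $\sD_a$. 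A secondary technical step is verifying the Song et al.\ duality in the form needed for the ADBM bridge, whose drift differs from the VP-SDE by the time-dependent $k_t$-correction implicit in \cref{the:kt}; I expect this to follow from standard Anderson-type arguments once the bridge SDE is written out explicitly.
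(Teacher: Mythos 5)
Your proposal follows essentially the same route as the paper's proof in \cref{appen:prob}: Bayes' rule to write $p(\rvx_0|\hat{\rvx}_t)\propto p(\rvx_0)p(\rvx_t|\rvx_0)$, the forward/reverse SDE duality of Song et al.\ in the infinite-timestep limit to identify $p(\hat{\rvx}_t|\rvx_0)$ with the forward kernel, and the bound $0<k_t<\sqrt{\bar{\alpha}_t}$ to compare the two Gaussian exponents. One bookkeeping caveat: the paper's comparison reduces to $\E_{\bepsilon}\|\sqrt{1-\bar{\alpha}_t}\bepsilon + k_t\bepsilon_a\| < \E_{\bepsilon}\|\sqrt{1-\bar{\alpha}_t}\bepsilon + \sqrt{\bar{\alpha}_t}\bepsilon_a\|$, i.e.\ the residual adversarial coefficient for ADBM is $k_t$ rather than the $\sqrt{\bar{\alpha}_t}-k_t$ you state, so the half of the bound doing the work is $k_t<\sqrt{\bar{\alpha}_t}$ rather than $k_t>0$ (both follow from \cref{the:kt}, so the conclusion is unaffected). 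Your closing concern about upgrading the pointwise mean-shift comparison to an inequality between the two indicator integrals over $\{\rvx_0\notin\sD_a\}$ is well placed: the paper's own proof also only compares the integrands at the true clean $\rvx_0$ in expectation over $\bepsilon$ and does not carry out the translation/monotonicity argument you anticipate needing.
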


\begin{proof}
The concept of infinite timestep can be viewed as dividing a finite length of time into infinitesimal intervals, which corresponds to the situation of the following SDEs proposed by \citet{songscore}.
Denoting $\rvw$ as the standard Wiener process, $\bar{\rvw}$ as the reverse-time standard Wiener process, and $p_t(\rvx)$ the probability density of $\rvx_t$, the forward process can be described by 
\begin{equation}
\label{eq:SDEforward}
\dif \rvx = f(\rvx,t)\dif t +g(t)\dif \rvw,
\end{equation}
where $f(\rvx,t)$ and $g(t)$ denote the drift and diffusion coefficients, respectively. Under mild assumptions, the reverse process can be derived from:
\begin{equation}
\label{eq:SDEbackward}
\dif \rvx = [f(\rvx,t)\dif t -g(t)^2\nabla_{\rvx}\log p_t(\rvx)]\dif t+g(t)\dif \bar{\rvw}.
\end{equation}

In this context, the reverse of a diffusion process is also a diffusion process, running backwards in time and given by the reverse-time SDE (\cref{eq:SDEbackward}).
Therefore, if timestep is infinite,  $\{\rvx_t\}_{t:0\to T}$ and $\{\hat{\rvx}_t\}_{t:T\to 0}$, as the solutions of \cref{eq:SDEforward} and \cref{eq:SDEbackward} respectively, follow the same distribution.
And due to the Bayes' rule, 
\begin{equation}
\begin{aligned}
p({\rvx}_0|\hat{\rvx}_t)
&=p({\rvx}_0)\frac{p(\hat{\rvx}_t|{\rvx}_0)}{p(\hat{\rvx}_t)}\\
&=p(\rvx_0)\frac{p({\rvx}_t|{\rvx}_0)}{p(\hat{\rvx}_t)}\\
&\propto p(\rvx_0)p(\rvx_t|\rvx_0)
\end{aligned}
\end{equation}
Since $\hat{\rvx}_t:=\rvx_t^d$ in ADBM and $\hat{\rvx}_t:=\rvx_t^a$ in DiffPure, 
then for all the examples, 
\begin{equation}
\label{eq:ADBMprob}
\begin{aligned}
P(B)
&=\int \mathbbm{1}_{\{\rvx_0\notin \sD_a\}}p({\rvx}_0|\hat{\rvx}_t)\dif \rvx_0\\
&\propto \int \mathbbm{1}_{\{\rvx_0\notin \sD_a\}}p(\rvx_0)p(\rvx_t^d|\rvx_0)\dif \rvx_0\\
&\propto\int \mathbbm{1}_{\{\rvx_0\notin \sD_a\}}\exp\left(-\frac{\|\rvx_t^a-\sqrt{\bar{\alpha}_t}\rvx_0-(\sqrt{\bar{\alpha}_t}-k_t)\bepsilon_a\|^2}{2(1-\bar{\alpha}_t)}\right)p(\rvx_0)\dif \rvx_0\\
\end{aligned}
\end{equation}
and 
\begin{equation}
\label{eq:DiffPureprob}
\begin{aligned}
P(D)
&=\int \mathbbm{1}_{\{\rvx_0\notin \sD_a\}}p({\rvx}_0|\hat{\rvx}_t)\dif \rvx_0\\
&\propto \int \mathbbm{1}_{\{\rvx_0\notin \sD_a\}}p(\rvx_0)p(\rvx_t^a|\rvx_0)\dif \rvx_0\\
&\propto \int \mathbbm{1}_{\{\rvx_0\notin \sD_a\}}\exp\left(-\frac{\|\rvx_t^a-\sqrt{\bar{\alpha}_t}\rvx_0\|^2}{2(1-\bar{\alpha}_t)}\right)p(\rvx_0)\dif \rvx_0,
\end{aligned}
\end{equation}
where the $\sD_a$ represents the set of adversarial examples. 
Subtract \cref{eq:ADBMprob} by \cref{eq:DiffPureprob}, and we have 
\begin{equation}
\begin{aligned}
    P(B) - P(D)\propto 
    &\int \mathbbm{1}_{\{\rvx_0\notin \sD_a\}}p(\rvx_0)\\
    &\left[\exp\left(-\frac{\|\rvx_t^a-\sqrt{\bar{\alpha}_t}\rvx_0-(\sqrt{\bar{\alpha}_t}-k_t)\bepsilon_a\|_2^2}{2(1-\bar{\alpha}_t)}\right)-\exp\left(-\frac{\|\rvx_t^a-\sqrt{\bar{\alpha}_t}\rvx_0\|_2^2}{2(1-\bar{\alpha}_t)}\right)\right]\dif \rvx_0.
\end{aligned}
\end{equation}
Therefore, we only need to compare $\|\rvx_t^a-\sqrt{\bar{\alpha}_t}\rvx_0-(\sqrt{\bar{\alpha}_t}-k_t)\bepsilon_a\|_2^2$ and $\|\rvx_t^a-\sqrt{\bar{\alpha}_t}\rvx_0\|_2^2$ for comparing $P(B)$ and $P(D)$. 
\par
Since  $ \rvx_t^a = \sqrt{\Bar{\alpha}_t}\rvx_0^a + \sqrt{1-\Bar{\alpha}_t}\bepsilon$ and $\rvx_0^a = \rvx_0 + \bepsilon_a$, 
\begin{equation}
\begin{aligned}
    \|\rvx_t^a-\sqrt{\bar{\alpha}_t}\rvx_0-(\sqrt{\bar{\alpha}_t}-k_t)\bepsilon_a\| 
    &=\|\sqrt{\Bar{\alpha}_t}\rvx_0^a+ \sqrt{1-\Bar{\alpha}_t}\bepsilon-\sqrt{\bar{\alpha}_t}\rvx_0-(\sqrt{\bar{\alpha}_t}-k_t)\bepsilon_a\|\\
    &=\| \sqrt{1-\Bar{\alpha}_t}\bepsilon+k_t\bepsilon_a\|,
\end{aligned}
\end{equation}
\begin{equation}
\begin{aligned}
    \|\rvx_t^a-\sqrt{\bar{\alpha}_t}\rvx_0\|
    &=\|\sqrt{\Bar{\alpha}_t}\rvx_0^a + \sqrt{1-\Bar{\alpha}_t}\bepsilon-\sqrt{\bar{\alpha}_t}\rvx_0\|\\
    &=\|\sqrt{1-\Bar{\alpha}_t}\bepsilon+\sqrt{\bar{\alpha}_t}\bepsilon_a\|.
\end{aligned}
\end{equation}
Since $0<k_t<\sqrt{\Bar{\alpha}_t}$ always holds, thus in expectation we have:
\begin{equation}
\mathbb{E}_{\bepsilon}\| \sqrt{1-\Bar{\alpha}_t}\bepsilon+k_t\bepsilon_a\|<\mathbb{E}_{\bepsilon}\|\sqrt{1-\Bar{\alpha}_t}\bepsilon+\sqrt{\bar{\alpha}_t}\bepsilon_a\|.
\end{equation}
Hence,
\begin{equation}
\mathbb{E}_{\bepsilon}\left[\exp\left(-\frac{\|\rvx_t^a-\sqrt{\bar{\alpha}_t}\rvx_0-(\sqrt{\bar{\alpha}_t}-k_t)\bepsilon_a\|_2^2}{2(1-\bar{\alpha}_t)}\right)\right] > \mathbb{E}_{\bepsilon}\left[\exp\left(-\frac{\|\rvx_t^a-\sqrt{\bar{\alpha}_t}\rvx_0\|_2^2}{2(1-\bar{\alpha}_t)}\right)\right],
\end{equation}
implying that the probability of ADBM reversing the adversarial example to the clean example is higher than that of DiffPure.
\end{proof}

\section{Additional Settings}

\subsection{Additional Training Settings of ADBM}
\label{appen:setting}
We implemented ADBM based loosely on the original implementations by \citet{songscore}. Note that when fine-tuning the ADBM with the pre-trained checkpoint, following \citet{songscore}, we also adopted the continuous version of \cref{eq:adbmloss}. This version is conceptually similar to its discrete counterpart, with the exception that $t$ in  \cref{eq:adbmloss} represents a continuous value rather than a discrete one. Moreover, we used the Adam optimizer \citep{adam} and incorporated the exponential moving average of models, with the average rate being 0.999. The batch size was set to 128 for SVHN and CIFAR-10, 112 for Tiny-ImageNet, and 64 for ImageNet-100 (due to memory constraints). All experiments were run using PyTorch 1.12.1 and CUDA 11.3 on 4 NVIDIA 3090 GPUs.

\subsection{Additional Adaptive Attack Configurations}
\label{appen:configure}

When performing the adaptive attacks (PGD with EOT) for the AP methods, we set the step sizes to be 0.007, 0.5, and 0.005 for $l_\infty$, $l_1$, and $l_2$ attacks, respectively. For the $l_1$ attack, we set the sparsity level as 0.95. As shown in \cref{sec:reliable}, the adaptive evaluating is quite time-consuming. Thus following \citet{diffpure}, we conducted the adaptive attacks three times on a subset of
512 randomly sampled images from the test dataset and reported the mean accuracy along with the standard deviation. We note that sometimes the standard deviation is omitted to present the results more clearly.

\begin{table*}[!h]
  \centering
  \small
  \caption{Accuracies ($\%$) of DiffPure under two attacks using the full gradient on CIFAR-10. }
   \setlength{\tabcolsep}{4pt}
   {
    \begin{tabular}{c|cccc|c}
    \toprule
	  \multirow{2}*{Attack Method} & \multirow{2}*{Clean Acc} & \multicolumn{4}{c}{Robust Acc} \\
        \cline{3-6}
	  &  & $l_\infty$ norm & $l_1$ norm & $l_2$ norm &  Average  \\
      \midrule
      PGD + EOT (Full Grad.) & \multirow{2}*{$92.5 \pm 0.5$} & $42.2 \pm 2.1$ & $44.3 \pm 1.3$ & $60.8 \pm 2.3$ & $49.1 \pm 1.7$ \\
      AutoAttack + EOT (Full Grad.) & & $62.70$ & $53.91$ & $63.87$ & $60.16$ \\
	\bottomrule

    \end{tabular}
    }
  \label{tab:aaresults}%
\end{table*}

\begin{table*}[!h]
  \centering
  \small
  \caption{Accuracies ($\%$) of DiffPure and ADBM under different attacks in the $l_2$ setting using the full gradient on CIFAR-10. The strongest attack results for each defense method are highlighted.}
   {
    \begin{tabular}{c|cccc}
    \toprule
	  Method & C\&W + EOT & DeepFool + EOT & AutoAttack + EOT & PGD + EOT (Our Setting) \\
	  
      \midrule
      DiffPure & $74.8$ & $78.3$ & $63.9$ & $\textbf{60.8}$ \\
      ADBM (Ours) & $78.3$ & $84.8$ & $66.8$ & $\textbf{63.3}$ \\
	\bottomrule

    \end{tabular}
    }
  \label{tab:cw}%
\end{table*}

\section{Additional Evaluation Results}

\subsection{Comparison between Different Attack Methods on CIFAR-10}
\label{appen:pgdvsaa}
Similar to \citet{robusteval}, we first performed a comparison between PGD with 20 EOT samples (full gradient) and AutoAttack with the same number of EOT samples (the default setting of \texttt{rand} version AutoAttack yet with full gradient). The main difference between AutoAttack and vanilla PGD lies in the automatic adjustment of step size by AutoAttack, but the adjustment algorithm may be influenced by the randomness of stochastic pre-processing defense. 
The comparison results are presented in \cref{tab:aaresults}. PGD with EOT achieved significantly better attack performance across all threat models. 

We additionally performed extra attack evaluation with the full-gradient C\&W \citep{cw} and DeepFool \citep{deepfool} attacks, which are optimization-based attacks  that differ significantly from PGD. Here their original $l_2$ setting \citep{cw, deepfool} were evaluated with 1,000 iteration steps and 20 EOT samples. The implementation used the standard \texttt{torchattacks} benchmark\footnote{\url{https://github.com/Harry24k/adversarial-attacks-pytorch}}. The robustness results are shown in \cref{tab:cw} (along with the PGD and AutoAttack $l_2$ results in \cref{tab:aaresults}). We can see that PGD with EOT is also significantly stronger than other attacks such as C\&W and DeepFool, which need considerably more iteration steps. Additionally, ADBM is more robust than DiffPure regardless of the evaluation methods.

Thus, considering that AutoAttack is widely recognized as a reliable attack method for AT models, we employed AutoAttack for AT evaluation. Conversely, for AP evaluation, we measured the worst-case robustness of each defense method by employing PGD with EOT in our main experiments.


\begin{table*}[!t]
  \centering
  \small
  \caption{Accuracies (\%) of methods under different white-box adaptive attack threats on SVHN. The same conventions are used as in \cref{tab:cifar10}.}
   \resizebox{\linewidth}{!}{
   \setlength{\tabcolsep}{1.5pt}
   {
    \begin{tabular}{ccc|cccc|c}
    \toprule
	  \multirow{2}*{Architecture}	& \multirow{2}*{Method} & \multirow{2}*{Type}  & \multirow{2}*{Clean Acc} & \multicolumn{4}{c}{Robust Acc} \\
        \cline{5-8}
	  	  &  & & & $l_\infty$ norm & $l_1$ norm & $l_2$ norm &  Average  \\
	  \midrule
	  WRN-28-10 & Vanilla & - & $98.11$ & $0.19$ & $0.02$ & $0.03$ & $0.08$ \\
	  \hline
      WRN-28-10 & \citep{hat} & \multirow{3}*{AT} & $94.46$ & $52.65$ & $0.16$ & $6.76$ & $19.86$ \\
	  WRN-28-10 & \citep{arowat} & & $93.00$ & $52.70$ & $0.04$ & $3.27$ & $18.67$ \\
	  WRN-28-10 & \citep{pangdiffusin} & & $95.56$ & $\textbf{64.00}$ & $0.14$ & $5.07$ & $23.07$ \\
	  \hline
	  UNet+WRN-28-10 & DiffPure \citep{diffpure} & \multirow{2}*{AP} & $93.9 \pm 0.7$ & $39.7 \pm 2.2$ & $46.1 \pm 2.1$ & $63.3 \pm 0.8$ & $49.7 \pm 1.7$ \\
	  UNet+WRN-28-10 & ADBM (Ours) & & $93.5 \pm 0.8$ & $47.9 \pm 1.4$ & $\textbf{51.2} \pm 0.6$ & $\textbf{65.7} \pm 1.5$ & $\textbf{54.9} \pm 1.1$ \\
	\bottomrule

    \end{tabular}
    }
    }
  \label{tab:svhn}%
\end{table*}

\begin{table*}[!t]
  \centering
  \small
  \caption{Accuracies (\%) of methods under different adaptive attack threats on Tiny-ImageNet. The same conventions are used as in \cref{tab:cifar10}.}
   \resizebox{\linewidth}{!}{
   \setlength{\tabcolsep}{2pt}
   {
    \begin{tabular}{ccc|cccc|c}
    \toprule
	  \multirow{2}*{Architecture}	& \multirow{2}*{Method} & \multirow{2}*{Type}  & \multirow{2}*{Clean Acc} & \multicolumn{4}{c}{Robust Acc} \\
        \cline{5-8}
	  	  &  & & & $l_\infty$ norm & $l_1$ norm & $l_2$ norm &  Average  \\
	  \midrule
      WRN-28-10 & Vanilla & - & $69.49$ & $0.06$ & $2.09$ & $0.08$ & $0.74$ \\
	  \hline
      WRN-28-10 & \citep{hat} & \multirow{3}*{AT} & $50.94$ & $29.13$ & $20.15$ & $24.54$ & $24.61$ \\
	  WRN-28-10 & \citep{arowat} & & $50.89$ & $27.02$ & $17.79$ & $23.29$ & $22.70$ \\
	  WRN-28-10 & \citep{pangdiffusin} & & $57.59$ & $\textbf{38.41}$ & $10.73$ & $27.21$ & $25.45$ \\
	  \hline
	  UNet+WRN-28-10 & DiffPure \citep{diffpure} & \multirow{2}*{AP} & $58.0 \pm 1.7$ & $24.8 \pm 1.8$ & $44.3 \pm 0.3$ & $32.9 \pm 1.1$& $34.0 \pm 0.8$ \\
	  UNet+WRN-28-10 & ADBM (Ours) & & $59.6 \pm 1.2$ & $29.3 \pm 1.7$ & $\textbf{46.0} \pm 0.4$ & $\textbf{38.1} \pm 1.3$& $\textbf{37.8} \pm 0.9$  \\
	\bottomrule

    \end{tabular}
    }
    }
  \label{tab:tinyimagenet}%
\end{table*}

\begin{table*}[!t]
  \centering
  \small
  \caption{Accuracies (\%) of methods under different adaptive attack threats on ImageNet-100. The same conventions are used as in \cref{tab:cifar10}.}
   \setlength{\tabcolsep}{4pt}
   {
    \begin{tabular}{ccc|cccc|c}
    \toprule
	  \multirow{2}*{Architecture}	& \multirow{2}*{Method} & \multirow{2}*{Type}  & \multirow{2}*{Clean Acc} & \multicolumn{4}{c}{Robust Acc} \\
        \cline{5-8}
	  	  &  & & & $l_\infty$ norm & $l_1$ norm & $l_2$ norm &  Average  \\
	  \hline
      ResNet-50 & \citep{liu2023} & \multirow{2}*{AT} & $78.8$ & $47.2$ & $3.6$ & $25.0$ & $25.3$ \\
	  ResNet-50 & \citep{light} & & $79.5$ & $\textbf{50.3}$ & $3.9$ & $23.8$ & $26.0$ \\
	  \hline
	  UNet+ResNet-50 & DiffPure \citep{diffpure} & \multirow{2}*{AP} & $77.3$ & $22.2$ & $55.5$ & $58.4$ & $45.4$ \\
	  UNet+ResNet-50 & ADBM (Ours) & & $79.5$ &	$25.2$ & $\textbf{58.6}$ & $\textbf{61.3}$ & $\textbf{48.4}$  \\
	\bottomrule

    \end{tabular}
    }
  \label{tab:imagenet100}%
\end{table*}

\begin{table*}[!t]
  \centering
  \small
  \caption{Accuracies ($\%$) of methods under three query-based attacks and the transfer-based attack on CIFAR-10. The same conventions are used as in \cref{tab:svhnquery}.}
   \setlength{\tabcolsep}{2.5pt}
   {
    \begin{tabular}{ccc|ccccc|c}
    \toprule
	  \multirow{2}*{Architecture}	& \multirow{2}*{Method} & \multirow{2}*{Type}  & \multirow{2}*{Clean Acc} & \multicolumn{5}{c}{Robust Acc} \\
        \cline{5-9}
	  	  &  & & & RayS & Square & SPSA & Transfer &   Average  \\
	  \midrule
	  WRN-70-16 & Vanilla & - & $97.02$ & $6.64$ & $2.44$ & $1.37$ & - & - \\
	  \hline
      WRN-70-16 & \citep{gowal20} & \multirow{4}*{AT} & $91.10$ & $79.20$ & $71.62$ & $80.76$ & $89.98$ & $82.33$\\
	  WRN-70-16 & \citep{rebuffiat} & & $88.54$ & $73.91$ & $69.35$ & $75.82$ & $87.64$ & $79.05$\\
	  WRN-70-16 & \citep{gowalat} & & $88.74$ & $78.81$ & $73.93$ & $79.88$ & $87.89$ & $81.85$\\
	  WRN-70-16 & \citep{pangdiffusin} & & $93.25$ & $81.89$ & $76.30$ & $82.13$ & $90.34$ & $84.78$\\
	  \hline
        UNet+WRN-70-16 & \citep{yoon} & \multirow{3}*{AP} & $87.93$ & $74.22$ & $72.17$ & $68.55$ & $84.28$ & $77.43$\\
	  UNet+WRN-70-16 & DiffPure \citep{diffpure} & & $92.51$ & $90.92$ & $90.53$ & $90.33$ & $90.16$ & $90.89$\\
	  UNet+WRN-70-16 & ADBM (Ours) & &  $91.86$ & $\textbf{91.31}$ & $\textbf{91.50}$ & $\textbf{90.72}$ & $\textbf{90.39}$ & $\textbf{91.16}$\\
	\bottomrule

    \end{tabular}
    }
  \label{tab:cifar10query}%
\end{table*}

\subsection{Additional White-Box Results on Several Datasets}
\label{appen:tinyimagenet}

The accuracies on SVHN are shown in \cref{tab:svhn}. ADBM consistently achieved significantly better adversarial robustness than DiffPure across all adversarial threats. On average, ADBM outperformed DiffPure by 5.2\%. In addition, although SOTA AT models showed excellent performance on seen attacks, they struggle to exhibit robustness against unseen $l_1$ and $l_2$ attacks. On average, ADBM surpasses these AT models by 31.8\%.

The white-box adaptive evaluation results on Tiny-ImageNet are respectively presented in \cref{tab:tinyimagenet} and \cref{tab:imagenet100}. In the Tiny-ImageNet evaluation, we maintained consistent settings with those used for CIFAR-10, with the exception that for the $l_\infty$ threat, we set the bound to be $\epsilon_\infty = 4/255$, which is a popular setting for this dataset \citep{arowat, pangdiffusin}. In the ImageNet-100 evaluation, we set the bounds $\epsilon_\infty = 4/255$, $\epsilon_1 = 75$, $\epsilon_2 = 2$, which is a commonly used setting for the high-resolution datasets like ImageNet-1K. For this dataset, we directly compared ADBM with representative ResNet-50 AT checkpoints specified trained on ImageNet-1K by masking out the 900 output channels not included in ImageNet-100. By examining \cref{tab:tinyimagenet} and \cref{tab:imagenet100}, we can draw similar conclusions to those observed on SVHN and CIFAR-10.

\subsection{Additional Results under Black-Box Seen Threats}
\label{appen:querycifar10}

The evaluation results on CIFAR-10 under black-box seen threats are presented in \cref{tab:cifar10query}. In this evaluation, we maintained consistent settings with those used for SVHN. By examining \cref{tab:svhnquery} and \cref{tab:cifar10query}, we can conclude that under the realistic black-box attacks, ADBM has advantages over AT models even on the seen threat.

\subsection{Results under Black-Box Unseen Threats}
\label{appen:unseen}

We further evaluate the generalization ability of ADBM on more unseen threats beyond norm-bounded attacks. Two patch-like attacks, PI-FGSM \citep{pifgsm} and NCF \citep{ncf}, and two recent diffusion-based attacks, DiffAttack \citep{diffattack} and DiffPGD \citep{diffpgd} were evaluated. As the code of these methods only natively supports input resolution $224 \times 224$, we conducted experiments on ImageNet-100. The results are show in \cref{tab:diffusionattack}. We can conclude that ADBM demonstrates better generalization ability than DiffPure and previous AT methods against these black-box unseen threats.

\subsection{Results of ADBM on New Classifiers}
\label{appen:transfer}

Here we utilized the fine-tuned ADBM checkpoint, trained with adversarial noise from a WRN-70-16 classifier, as the pre-processor for a WRN-28-10 model and a vision transformer (ViT) model\footnote{We used the checkpoint released by \url{https://github.com/dqj5182/vit\_cnn\_cifar\_10\_from\_scratch}, which had 90.6\% clean accuracy on CIFAR-10.} directly, denoted as ADBM (Transfer). The results shown in \cref{tab:ablation_newf} demonstrate that ADBM (Transfer) achieved robustness levels comparable to ADBM trained directly with WRN-28-10 or the ViT (52.9\% vs. 53.1\% on WRN-28-10, and 43.2\% vs. 45.1\% on ViT). This finding highlights the practicality of ADBM, as the fine-tuned ADBM model on a specific classifier can potentially be directly applied to a new classifier without requiring retraining. We guess this may be attributed to the transferability of adversarial noise \citep{mim}.

\begin{table}[!t]
    \centering
    \caption{Accuracies (\%) of methods under different black-box unseen threats on ImageNet-100.}
    \small
    \begin{tabular}{ccc|ccccc}
        \toprule
        Architecture & Method & Type & NCF & PI-FGSM & Diff-Attack & Diff-PGD \\
        \midrule
        ResNet-50 & \citep{liu2023} & \multirow{2}*{AT} & $3.8$ & $26.3$ & $21.5$ & $29.4$ \\
        ResNet-50 & \citep{light} & & $11.2$ & $29.4$ & $28.5$ & $49.8$ \\
        \hline
        UNet+ResNet-50 & DiffPure & \multirow{2}*{AP} & $30.1$ & $38.4$ & $39.8$ & $71.9$ \\
        UNet+ResNet-50 & ADBM (Ours) &  & $\textbf{33.9}$ & $\textbf{42.7}$ & $\textbf{42.8}$ & $\textbf{72.4}$ \\
        \bottomrule
    \end{tabular}
    \label{tab:diffusionattack}
\end{table}

\begin{table}[!t]
  \centering
  \small
  \caption{Accuracies (\%) of different methods on CIFAR-10. Here the classifier used a WRN-28-10 or a vision transformer. ADBM (Transfer) was trained with noise from a WRN-70-16 while ADBM (Direct) was trained with noise from the WRN-28-10 or the vision transformer directly.}
   \setlength{\tabcolsep}{4pt}
   {
    \begin{tabular}{c|c|ccc|c}
    \toprule
	  Method & Classifier & $l_\infty$ & $l_1$ & $l_2$ & Average \\
	  \midrule
	  DiffPure & \multirow{3}*{WRN-28-10} & $42.4$ & $60.3$ & $43.8$ & $48.8$ \\
	  ADBM (Transfer) & & $45.1$ & $61.7$ & $\textbf{52.0}$ & $52.9$ \\
        ADBM (Direct) & & $\textbf{45.3}$ & $\textbf{62.1}$ & $51.9$ & $\textbf{53.1}$ \\
        \hline
      DiffPure & \multirow{3}*{ViT} & $25.0$ & $41.8$	& $51.6$ &	$39.5$ \\
	  ADBM (Transfer) & & $28.9$	& $46.7$ & $54.1$ &	$43.2$ \\
        ADBM (Direct) & & $\textbf{30.9}$	& $\textbf{48.4}$ & $\textbf{56.1}$ & $\textbf{45.1}$ \\
	  
	\bottomrule

    \end{tabular}
    }
  \label{tab:ablation_newf}%
\end{table}

\begin{table}[!t]
    \centering
    \caption{Inference time (in seconds) for diffusion-based purification on different datasets, corresponding to the reverse sampling (RS) steps using an NVIDIA 3090 GPU (with a batch size of 1). }
    \small
    \setlength{\tabcolsep}{4pt}{
    \begin{tabular}{cccccccc}
        \toprule
        Dataset & Network & RS = 1 & 2 & 3 & 4 & 5 & 100 (Original DiffPure) \\
        \midrule
        CIFAR-10 & UNet+WRN-70-16 & 0.060 & 0.099 & 0.152 & 0.189 & 0.248 & 4.420 \\
        SVHN & UNet+WRN-28-10 & 0.052 & 0.095 & 0.139 & 0.182 & 0.229 & 4.349 \\
        Tiny-ImageNet & UNet+WRN-28-10 & 0.056 & 0.010 & 0.149 & 0.195 & 0.232 & 4.407 \\
        \bottomrule
    \end{tabular}
    }
    \label{tab:infercost}
\end{table}

\section{Inference Cost of ADBM}
\label{appen:inference}

On one hand, the inference cost of diffusion-based purification can be significantly reduced via fast sampling according to our investigation. \cref{tab:infercost} gives the actual inference time (in seconds) for diffusion-based purification on different datasets, corresponding to the reverse sampling steps. Given that ADBM has demonstrated effectiveness even with a single reverse step (\cref{tab:diffpure_step}), it has potential applications in various real-time scenarios (only 0.06s is needed for inference in this case, as shown in \cref{tab:infercost}).

On the other hand, similar to DiffPure, ADBM necessitates additional parameters. Taking CIFAR-10 as an example, the classifier WRN-70-16 consists of 267M parameters, while the UNet architecture, specifically the \texttt{DDPM++ continuous} used in this work, consists of 104M parameters. It remains an open question whether the scale of the UNet architecture, which is currently directly adapted from architectures designed for generative tasks, can be further reduced for adversarial purification. We leave it to future work.

\section{Boarder Impact}
\label{appen:impact}
Our method, which effectively purifies adversarial noise, can potentially hinder the utility of certain ``adversarial for good'' methods. For example, it could cause more difficult copyright protection using adversarial noise \citep{copyright} when abusing our method. In addition, there seems to be a trade-off between adversarial robustness and the risk of privacy leakage \citep{privacy}. But in general, we believe the concrete positive impacts on trustworthy machine learning outweigh the potential negative impacts.

\end{document}